\tikzset{node/.style={circle, draw, minimum size=2.5em, text centered, decorate, thick, scale=.85}}
\tikzset{nodec/.style={circle, draw, minimum size=2.5em, text centered, decorate, thick, fill=lightgray, scale=.85}}
\tikzset{nodeh/.style={circle, dashed, draw, minimum size=2.5em, text centered, decorate, thick, scale=.85}}
\tikzset{waves/.style={snake=expanding waves,segment length=2pt,lightgray}}
\newcommand{\R}{\mathbb{R}}
\newcommand{\N}{\mathbb{N}}
\DeclarePairedDelimiterX{\norm}[1]{\lVert}{\rVert}{#1}
\DeclarePairedDelimiterX{\abs}[1]{\lvert}{\rvert}{#1}
\renewcommand{\epsilon}{\varepsilon}
\newcommand{\E}{\mathbb{E}}
\newcommand{\var}{\operatorname{Var}}
\newcommand{\cov}{\operatorname{Cov}}
\newcommand{\empcov}{\widehat{\operatorname{Cov}}}
\newcommand{\empcorr}{\widehat{\operatorname{Corr}}}
\newcommand{\identity}{\operatorname{Id}}
\newcommand{\vH}{\mathbf{H}}
\newcommand{\vS}{\mathbf{S}}
\newcommand{\vX}{\mathbf{X}}
\newcommand{\groups}{\mathcal{G}}
\newcommand{\COtwo}{\log(\operatorname{CO}_2)}
\newcommand{\Temp}{\operatorname{T}}
\newtheorem{assumption}{Assumption}
\newcommand\Tau{\mathrm{T}}
\newcommand\coroICA{\textsf{coroICA}\xspace}
\newcommand\coroICATD{\textsf{coroICA\,(TD)}\xspace}
\newcommand\coroICAvar{\textsf{coroICA\,(var)}\xspace}
\newcommand\coroICAvarTD{\textsf{coroICA\,(var\,\&\,TD)}\xspace}
\newcommand\oururl{\url{https://sweichwald.de/coroICA/}\xspace}
\definecolor{colA}{HTML}{008080}
\definecolor{colB}{HTML}{67001F}
\definecolor{colC}{HTML}{449B52}
\definecolor{colD}{HTML}{979797}
\def\maketitle{\par
 \begingroup
   \def\thefootnote{\fnsymbol{footnote}}
   \def\@makefnmark{\hbox to 0pt{$^{\@thefnmark}$\hss}}
   \deffootnote[1.7em]{1.6em}{2em}{$^\thefootnotemark$}
   \@maketitle \@thanks
 \endgroup
\setcounter{footnote}{0}
 \let\maketitle\relax \let\@maketitle\relax
 \gdef\@thanks{}\gdef\@author{}\gdef\@title{}\let\thanks\relax}
\begin{document}

\title{Robustifying Independent Component Analysis\\by Adjusting for Group-Wise Stationary Noise}

\author{\name Niklas Pfister\footnotemark[1]\phantom{\thanks{Authors contributed equally. Most of this work was done while SW was at the Max Planck Institute for Intelligent Systems, Tübingen, Germany.}} \email pfister@stat.math.ethz.ch \\
       \addr Seminar for Statistics, ETH Zürich\\
       Rämisstrasse 101, 8092 Zürich, Switzerland
       \AND
       \name Sebastian Weichwald\footnotemark[1] \email sweichwald@math.ku.dk \\
       \addr Department of Mathematical Sciences, University of Copenhagen\\
       Universitetsparken 5, 2100 Copenhagen, Denmark
       \AND
       \name Peter Bühlmann \email buhlmann@stat.math.ethz.ch \\
       \addr Seminar for Statistics, ETH Zürich\\
       Rämisstrasse 101, 8092 Zürich, Switzerland
       \AND
       \name Bernhard Schölkopf \email bs@tue.mpg.de \\
       \addr Empirical Inference Department, Max Planck Institute for Intelligent Systems\\
       Max-Planck-Ring 4, 72076 Tübingen, Germany}

\editor{Kenji Fukumizu}
\maketitle

\begin{abstract}%
  We introduce \coroICA, confounding-robust independent component
  analysis, a novel ICA algorithm which decomposes linearly mixed
  multivariate observations into independent components that are
  corrupted (and rendered dependent) by hidden group-wise stationary
  confounding. It extends the ordinary ICA model in a theoretically
  sound and explicit way to incorporate group-wise (or
  environment-wise) confounding. We show that our proposed general
  noise model allows to perform ICA in settings where other noisy ICA
  procedures fail. Additionally, it can be used for applications with
  grouped data by adjusting for different stationary noise within each
  group. Our proposed noise model has a natural relation to causality
  and we explain how it can be applied in the context of causal
  inference. In addition to our theoretical framework, we provide an
  efficient estimation procedure and prove identifiability of the
  unmixing matrix under mild assumptions. Finally, we illustrate the
  performance and robustness of our method on simulated data, provide
  audible and visual examples, and demonstrate the applicability to
  real-world scenarios by experiments on publicly available Antarctic
  ice core data as well as two EEG data sets.  We provide a
  scikit-learn compatible pip-installable Python package \coroICA as
  well as R and Matlab implementations accompanied by a documentation
  at \oururl.
\end{abstract}

\begin{keywords}
blind source separation, causal inference, confounding noise, group analysis, heterogeneous data, independent component analysis, non-stationary signal, robustness
\end{keywords}

\section{Introduction}

The analysis of multivariate data is often complicated by high
dimensionality and complex inter-dependences between the observed
variables. In order to identify patterns in such data it is therefore
desirable and often necessary to
separate different aspects of the data.  In multivariate statistics,
for example, principal component analysis~(PCA) is a common
preprocessing step that decomposes the data into orthogonal principle
components which are sorted according to how much variance of
the original data each component explains. There are two important
applications of this. Firstly, one can reduce the
dimensionality of the data by projecting it onto the lower
dimensional space spanned by the leading principal components which
maximize the explained variance. Secondly, since the principle
components are orthogonal, they separate in some sense different
(uncorrelated) aspects of the data. In many situations this enables a
better interpretation and representation.

Often, however, PCA may not be sufficient to separate the data in a
desirable way due to more complex inter-dependences in the
multivariate data (see e.g., Section~1.3.3 in \citet{hyvarinen2001a}
for an instructive example). This observation motivates the
development of independent component analysis (ICA), formally
introduced in its current form by \citet{cardoso89} and
\citet{comon1994}. ICA is a widely used unsupervised blind source
separation technique that aims at decomposing an observed mixture of
independent source signals. More precisely, assuming that the observed
data is a linear mixture of underlying independent variables, one
seeks the unmixing matrix that maximizes the independence between the
signals it extracts. There has been a large amount of research on
different types of ICA procedures and their interpretations, e.g.,\
\citet[Infomax]{bell1995} who maximize the entropy,
\citet[fastICA]{hyvarinen1999} maximizing the kurtosis or
\citet[SOBI]{belouchrani1997} who propose to minimize time-lagged
dependences, to name only some of the widespread examples.

ICA has applications in many fields, for example in finance
\citep[e.g.,][]{back1997}, the study of functional magnetic resonance
imaging (fMRI) data
\citep[e.g.,][]{mckeown1998spatially,mckeown1998analysis,calhoun2003ica},
and notably in the analysis of electroencephalography (EEG) data
\citep[e.g.,][]{makeig1995,makeig1997blind,delorme2004eeglab}.  The
latter is motivated by the common assumption that the signals recorded
at EEG electrodes are a (linear) superposition of cortical dipole
signals \citep{Nunez2006}. Indeed, ICA-based preprocessing has become
the de facto standard for the analysis of EEG data. The extracted
components are interpreted as corresponding to cortical sources
\citep[e.g.,][]{ghahremani1996independent,zhukov2000independent,makeig2002dynamic}
or used for artifact removal by dropping components that are dominated
by ocular or muscular activity
\citep[e.g.,][]{jung2000removing,delorme2007enhanced}.

In many applications, the data at hand is heterogeneous and parts of
the samples can be grouped by the different settings (or environments)
under which the observations were taken. For example, we can group those
samples of a multi-subject EEG recording that belong to the same
subject. For the analysis and interpretation of such data across
different groups, it is desirable to extract one set of common
features or signals instead of obtaining individual ICA decompositions
for each group of samples separately. Here, we present a novel,
methodologically sound framework that extends the ordinary ICA model,
respects the group structure and is robust by explicitly accounting
for group-wise stationary confounding. More precisely, we consider a
model of the form
\begin{equation}
  \label{eq:basic_model}
  X_i = A\cdot S_i+H_i,
\end{equation}
where $i$ denotes the sample index, $A$ remains fixed across different
groups, $S_i$ is a vector of independent source signals and $H_i$ is a
vector of stationary confounding noise variables with fixed covariance
within each group (an intuitive
example where such a scenario may be encountered in practice is
illustrated in Figure~\ref{fig:eegscenario}).  Based on this extension
to ordinary ICA, we construct a method and an easy to implement
algorithm to extract one common set of sources that are robust against
confounding within each group and can be used for across-group
analyses. The unmixing also generalizes to previously unseen groups.

\subsection{Relation to Existing Work}

ICA is well-studied with a tremendous amount of research related to
various types of extensions and relaxations of the ordinary ICA
model. In light of this, it is important to understand where our
proposed procedure is positioned and why it is an interesting and
useful extension. Here, we look at ICA research from three
perspectives and illustrate how our proposed \coroICA methodology
relates to existing work. First off, in
Section~\ref{sec:classical_ica} we compare our proposed methodology
with other noisy ICA models. In Section~\ref{sec:ajd}, we review ICA
procedures based on approximate joint matrix diagonalization. Finally, in
Section~\ref{sec:grouped_data_icas} we summarize the existing
literature on ICA procedures for grouped data and highlight the
differences to \coroICA.

\subsubsection{Noisy ICA Models}\label{sec:classical_ica}

The ordinary ICA model assumes that the observed process $X$ is a
linear mixture of independent source signals $S$ \emph{without} a
confounding term $H$. Identifiability of the source signals $S$ is
guaranteed by assumptions on $S$ such as non-Gaussianity or specific
time structures. For \coroICA we require---similar to other
second-order based methods (cf.\ Section~\ref{sec:ajd})---that the
source process $S$ is non-stationary. More precisely, we require that
either the variance or the auto-covariance of $S$ changes across
time. An important extension of the ordinary ICA model is known as
noisy ICA \citep[e.g.,][]{moulines1997} in which the data generating
process is assumed to be an ordinary ICA model with additional
additive noise. In general, this leads to further identifiability
issues. These can be resolved by assuming that the additive noise is
Gaussian and the signal sources non-Gaussian
\citep[e.g.,][]{hyvarinen1999fast}, which enables correct
identification of the mixing matrix. Another possibility is to assume
that the noise is independent over time, while the source signals are
time-dependent\footnote{%
Autocorrelated signals are time-dependent, while the absence of autocorrelation does not necessarily imply time-independence of the signal. We thus use the terms time-dependence and time-independence throughout this article.}
\citep[e.g.,][]{choi2000b}. In contrast, our assumption
on the noise term $H$ is much weaker, since we only require it to be
stationary and hence in particular allow for time-dependent noise in
\coroICA. As we show in our simulations in Section~\ref{sec:GARCH}
this renders our method robust with respect to confounding noise:
\coroICA is more robust against time-dependent noise while remaining
competitive in the setting of time-independent noise. We refer to the
book by \citet{hyvarinen2001a} for a review of most of the existing ICA models
and the assumptions required for identifiability.

\subsubsection{ICA based on Approximate Joint Diagonalization}\label{sec:ajd}

As an extension of PCA, the concept of ICA is naturally connected
to the notion of joint diagonalization of covariance-type
matrices. One of the first procedures for ICA was FOBI introduced by
\citet{FOBI}, which aims to jointly diagonalize the covariance matrix
and a fourth order cumulant matrix. Extending
on this idea \citet{cardoso1993} introduced the method JADE which
improves on FOBI by diagonalizing several different fourth order
cumulant matrices. Unlike FOBI, JADE uses a general joint matrix
diagonalization algorithm which is the de facto standard for all
modern approaches. In fact, there is a still-active field that
focuses on approximate joint matrix diagonalization, commonly
restricted to positive semi-definite matrices, and often with the
purpose of improving ICA procedures \citep[e.g.,][]{cardoso1996jacobi,
  ziehe2004, tichavsky2009,ablin2018beyond}.

Both JADE and FOBI are based on the assumption that the signals are
non-Gaussian. This ensures that the sources
are identifiable given independent and identically distributed observations.
A different stream of ICA research departs
from this assumption and instead assumes that the data are a linear mixture
of independent weakly stationary time-series. This model is
often referred to as a second-order source-separation model (SOS). The
time structure in these models allows to identify the sources by
jointly diagonalizing the covariance and auto-covariance. The
first method developed for this setting is AMUSE by \citet{tong1990}
who diagonalize the covariance matrix and the auto-covariance matrix
for one fixed lag. The performance of AMUSE is, however, fragile with
respect to the exact choice of the lag, which complicates practical application \citep{miettinen2012statistical}.
Instead of only using a single lag,
\citet{belouchrani1997} proposed the method SOBI which uses all lags
up to a certain order and jointly diagonalizes all the resulting
auto-covariance matrices. SOBI is to date still one of the most
commonly employed ICA methods, in particular in EEG analysis.

The SOS model is based on the assumption of weak stationarity of the
sources which implies that the signals have fixed
variance and auto-covariance structure across time. This assumption
can be dropped and the resulting models are often termed
non-stationary source separation models (NSS). The non-stationarity
can be leveraged to boost the performance of ICA methods in various
ways \citep[see][]{matsuoka1995, hyvarinen2001cov, choi2000a,
  choi2000b, choi2001, choi2001b, pham2001}. All aforementioned
methods make use of the non-stationarity by jointly diagonalizing
different sets of covariance or auto-covariance matrices and mainly
differ by how they perform the approximate joint matrix
diagonalization. For example, the methods introduced by
\citet{choi2000a, choi2000b, choi2001} make use of non-stationarity
across sources by separating the data into blocks and jointly
diagonalizing either the covariance matrices, the auto-covariances or
both across all blocks. For our experimental comparisons, we
implemented all three of these methods with the slight modification
that we use the recent uwedge approximate joint matrix diagonalization
procedure due to \citet{tichavsky2009}. We denote the resulting three
ICA variants as
\begin{compactitem}
\item choiICA\,(var): jointly diagonalize blocks of covariances,
\item choiICA\,(TD): jointly diagonalize blocks of auto-covariances,
\item choiICA\,(var\,\&\,TD): jointly diagonalize blocks of covariances
  and auto-covariances.
\end{compactitem}
Depending on the type of matrix which is diagonalized, each procedure
detects different types of signals and behaves differently
with respect to noise. \citet{choi2001b} suggest a modification
of choiICA\,(TD) in which instead of auto-covariance matrices,
differences of auto-correlation matrices are diagonalized. The
advantage being that it captures the non-stationarity of a signal more
explicitly. Our proposed method similarly aims to use this type of
signal but instead of considering the noise-free case, we explicitly
formalize a model class that generalizes to noisy settings.
Furthermore, we provide an identifiability theorem allowing for group-wise stationary confounding.
Such a result has not been proven for the aforementioned method in the noise-free case.
For a detailed description of both
SOS- and NSS-based methods we refer the reader to the review by
\citet{Nordhausen2014} and for recent developments on leveraging
non-stationarity for identifiability in non-linear ICA see
\cite{hyvarinen2016unsupervised}.

An exhaustive comparison of all methods is infeasible on the one hand
due to the sheer amount of different models and methods and on the
other hand due to the fact that appropriately maintained and easy
adaptable code---for most methods---simply does not exist. Therefore,
we focus our comparison on the following representative, modern
methods that are most closely related to \coroICA: fastICA, SOBI, choiICA\,(TD),
choiICA\,(var), choiICA\,(TD\,\&\,var).  The methods and their respective
assumptions on the source and noise characteristics are summarized in
Table~\ref{table:ica_methods}.

\begin{table}[]
  \begin{tabular}{lll}
    \textbf{method} & \textbf{signal type} & \textbf{allowed noise} \\ \hline
    choiICA\,(TD) & varying time-dependence & time-independent \\
    choiICA\,(var) & varying variance & none \\
    choiICA\,(var\,\&\,TD) & varying variance and time-dependence & none\\
    SOBI & fixed time-dependence & time-independent\\
    fastICA\footnotemark & non-Gaussian & none \\ \hline
    \coroICA & varying time-dependence \emph{and/or} variance & group-wise stationary
  \end{tabular}
  \caption{Important ICA procedures and the signal types
    they require as well as the noise they can deal with. \coroICA is
    a confounding-robust ICA variant and is the only method
    for which an identifiability result under time-dependent noise is available.}
  \label{table:ica_methods}
\end{table}
\footnotetext{The fastICA method can be extended to include
  Gaussian noise \citep[see][]{hyvarinen1999fast}.}

\subsubsection{ICA Procedures for Grouped Data}\label{sec:grouped_data_icas}

Applications in EEG and fMRI have motivated the development of a wide
variety of blind source separation techniques which are capable of
dealing with grouped data, e.g., where groups correspond to different
subjects or recording sessions.  A short review is given in
\citet{hyvarinen2013} and a detailed exposition in the context of
fMRI data is due to \citet{calhoun2003ica}.

Consider we are given $m$ groups $\{g_1,\dots,g_m\}$
and observe a corresponding data matrix $\vX_{g_i}\in\R^{d\times n_i}$
for each group, where $d$ is the number of observed signals and $n_i$
the number of observations. Using this notation, all existing ICA procedures
for grouped data can be related to one of three underlying models
extending the classical mixing model $\vX=A\cdot \vS$. The first,
often also referred to as ``temporal concatenation'',
assumes that the mixing remains equal while the sources are allowed to
change across groups leading to data of the form
\begin{equation}
  \label{eq:time_conc}
  \left(\vX_{g_1},\dots,\vX_{g_m}\right)=A\cdot\left(\vS_{g_1},\dots,\vS_{g_m}\right).
\end{equation}
The second model, often also referred to as ``spatial concatenation'',
assumes the sources remain fixed ($n_1 = \dots = n_m$)
while the mixing matrices are allowed to change, i.e.,
\begin{equation}
  \label{eq:spatial_conc}
  \begin{pmatrix}
    \vX_{g_1}\\\vdots\\\vX_{g_m}
  \end{pmatrix}
  =
  \begin{pmatrix}
    A_{g_1}\\\vdots\\A_{g_m}
  \end{pmatrix}
  \cdot\vS.
\end{equation}
Finally, the third model assumes that both the sources and the mixing
remains fixed across groups which implies that for all
$k\in\{1,\dots,m\}$ it holds that
\begin{equation}
  \label{eq:averaging}
  \vX_{g_k}=A\cdot\vS.
\end{equation}
In all three settings the baseline approach to ICA is to simply apply a
classical ICA to the corresponding concatenated or averaged
data, i.e., to apply the algorithm to the temporally/spatially
concatenated data matrices on the left-hand
side of above equations or the average over groups.
These ad-hoc approaches are appealing, since they postulate straightforward
procedures to solving the problem on grouped data and facilitate
interpretability of the resulting estimates. It is these ad-hoc approaches
that are implemented as the default behavior in toolboxes like the widely used
\textsf{eeglab} for EEG analyses \citep{delorme2004eeglab}.

Several procedures have been proposed tailored to specific
applications that extend on these baselines by employing additional
assumptions.  The most prominent such extensions are tensorial methods
that have found popularity in fMRI analysis.  They express the group
index as an additional dimension (the data is thus viewed as a
$\R^{d \times n \times m}$ tensor) and construct an estimate
factorization of the tensor representation.  Many of these procedures
build on the so called PARAFAC (parallel factor analysis) model
\citep{harshman1970}. Recasting the tensor notation, this model is of
the form \eqref{eq:spatial_conc} with $A_{g_k}=A\cdot D_{g_k}$ for all
groups and for diagonal matrices $D_{g_1},\dots,D_{g_m}$.  As can be
seen from this representation, the PARAFAC model allows the mixing
matrices to change across groups while they are constrained to be the
same up to different scaling of the mixing matrix columns
(intuitively, across groups the source dimensions are allowed to
project with different strengths onto the observed signal dimensions).
Given that the matrices $D_{g_1},\dots,D_{g_m}$ are sufficiently
different it is possible to estimate this model uniquely without
further assumptions. However, in the case that some of these diagonal
matrices are equal identifiability is lost. In such cases
\citet{beckmann2005tensorial} suggest to additionally require that the
individual components of the sources be independent. This is
comparable to the case where uncorrelatedness may not be sufficient for the
separation of sources while independence is.

The \coroICA procedure also allows for grouped-data but aims at
inferring a fixed mixing matrix A, i.e., a model as given in
\eqref{eq:time_conc} is considered. In contrast to vanilla
concatenation procedures, our methodology naturally incorporates
changes across groups by allowing and adjusting for different
stationary confounding noise in each group. We argue why this leads to
a more robust procedure and also illustrate this in our simulations
and real data experiments. More generally, our goal is to learn an
unmixing which allows to generalize to new and previously unseen
groups; think for example about learning an unmixing based on several
different training subjects and extending it to new so far unseen
subjects. Such tasks can appear in brain-computer interfacing
applications and can also be of relevance more broadly in feature
learning for classification tasks where classification models are to
be transferred from one group/domain to another.  Since our aim is to
learn a fixed mixing matrix $A$ that is confounding-robust and readily
applicable to new groups, \coroICA cannot naturally be compared to
models that are based on spatial concatenation~\eqref{eq:spatial_conc}
or fixed sources \emph{and} mixings~\eqref{eq:averaging}; these
methods employ fundamentally different assumptions on the model
underlying the data generating process, the crucial difference being
that we allow the sources and their time courses to change between
groups.

\subsection{Our Contribution}

One strength of our methodology is that it explicates a statistical
model that is sensible for data with group structure and can be
estimated efficiently, while being supported by provable
identification results. Furthermore, providing an explicit model with
all required assumptions enables a constructive discussion about the
appropriateness of such modeling decisions in specific application
scenarios. The model itself is based on a notion of invariance against
confounding structures from groups, an idea that is also related to
invariance principles in causality \citep{haavelmo44,peters2016}; see
also Section~\ref{sec:causalinterpretation} for a discussion on the
relation to causality.

We believe that \coroICA is a valuable
contribution to the ICA literature on the following grounds:
\begin{compactitem}
\item We introduce a methodologically sound framework which extends
  ordinary ICA to settings with grouped data and confounding noise.
\item We prove identifiability of the unmixing matrix under mild
  assumptions, importantly, we explicitly allow for time-dependent
  noise thereby lessening the assumptions required by existing noisy
  ICA methods.
\item We provide an easy to implement estimation procedure.
\item We illustrate the usefulness, robustness, applicability, and
  limitations of our newly introduced \coroICA algorithm as well as
  characterize the advantage of \coroICA over existing ICAs: The
  source separation by \coroICA is more stable across groups since it
  explicitly accounts for group-wise stationary confounding.
\item We provide an open-source scikit-learn compatible ready-to-use
  Python implementation available as \coroICA from the Python Package
  Index repository as well as R and Matlab implementations and an
  intuitive audible example which is available at \oururl.
\end{compactitem}

\section{Methodology}\label{sec:methodology}

We consider a general noisy ICA model inspired by ideas employed in
causality research (see Section~\ref{sec:causalinterpretation}). We
argue below that it allows to incorporate group structure and enables
joint inference on multi-group data in a natural way.  For the model
description, let $S_i=(S^1_i,\dots,S^d_i)^\top\in\R^{d\times 1}$ and
$H_i=(H^1_i,\dots,H^d_i)^\top\in\R^{d\times 1}$ be two independent
vector-valued sequences of random variables where
$i\in\{1,\dots, n\}$.  The components $S^1_i,\dots,S^d_i$ are assumed
to be mutually independent for each $i$ while, importantly, we allow
for any weakly stationary noise $H$. Let
$A\in\R^{d\times d}$ be an invertible matrix. The $d$-dimensional data
process $(X_i)_{i\in\{1,\dots,n\}}$ is generated by the following
noisy linear mixing model
\begin{equation}
  \label{eq:mixing}
  X_i = A\cdot S_i+H_i, \quad \text{for all }i\in\{1,\dots,n\}.
\end{equation}
$X$ is a linear combination of source signals $S$ and
confounding variables $H$. In this model, both $S$ and $H$ are
unobserved. One aims at recovering the mixing matrix $A$ as well as
true source signals $S$ from observations of $X$. Without additional assumptions, the
confounding $H$ makes it impossible to identify the mixing matrix $A$.
Even with additional assumptions it remains a difficult task (see
Section~\ref{sec:classical_ica} for an overview of related ICA
models). Given the mixing matrix $A$ it is straightforward to
recover the confounded source signals
$\widetilde{S}_i=S_i+A^{-1}\cdot H_i$.

Throughout this paper, we denote by
$\vX=(X_1,\dots,X_n)\in\R^{d\times n}$ the observed data matrix and
similarly by $\vS$ and $\vH$ the corresponding (unobserved) source and
confounding data matrices.  For a finite data sample generated by this
model we hence have
\begin{equation*}
  \vX = A\cdot \vS+\vH.
\end{equation*}

In order to distinguish between the confounding $H$ and the source
signals $S$ we assume that the two processes are sufficiently
different. This can be achieved by assuming the existence of a group
structure such that the covariance of the confounding $H$ remains
stationary within a group and only changes across groups.
\begin{assumption}[group-wise stationary confounding]
  \label{assumption:group_structure}
  There exists a collection of $m$ disjoint groups
  $\groups=\{g_1,\dots,g_m\}$ with $g_k\subseteq\{1,\dots,n\}$ and
  $\cup_{k=1}^mg_k=\{1,\dots,n\}$ such that for all $g\in\groups$ the
  process $(H_i)_{i\in g}$ is weakly stationary.
\end{assumption}
Under this assumption and given that the source signals change enough
within groups, the mixing matrix $A$ is identifiable (see
Section~\ref{sec:identifiable}). Similar to existing ICA methods discussed in
Section~\ref{sec:ajd}, we propose to estimate the mixing matrix $A$
by jointly diagonalizing empirical estimates of
dependence matrices.
In contrast to existing methods,
we explicitly allow and adjust for the confounding $H$. The
process of finding a matrix $V$ that simultaneously diagonalizes a set
of matrices is known as joint matrix diagonalization and has been
studied extensively \citep[e.g.,][]{ziehe2004, tichavsky2009}. In
Section~\ref{sec:estimation}, we show how to construct an estimator
for $V$ based on approximate joint matrix diagonalization.

The key step in adjusting for the confounding is to make use of the
assumption that in contrast to the signals $S$ the confounding $H$
remains stationary within groups. Depending on the type of signal in
the sources one can consider different sets of matrices. Here, we
distinguish between two types of signals.

\paragraph{Variance signal}
In case of a variance signal, the variance process of each signal
source $\var(S^j_i)$ changes over time. These changes can
be detected by examining the covariance matrix $\cov(X_i)$ over
time. For $V=A^{-1}$ and using \eqref{eq:mixing} it holds for all
$i\in\{1,\dots,n\}$ that
\begin{equation*}
  V\cov(X_i)V^{\top}=\cov(S_i)+V\cov(H_i)V^{\top}.
\end{equation*}
Since the source signal components $S_i^j$ are mutually independent,
the covariance matrix $\cov(S_i)$ is diagonal. Moreover, due to
Assumption~\ref{assumption:group_structure} the covariance matrix of
the confounding $H$ is constant, though not necessarily diagonal, within each group.
This implies for
all groups $g\in\groups$ and for all $k,l\in g$ that
\begin{equation}
  \label{eq:jointdiag_a}
  V\left(\cov(X_k)-\cov(X_l)\right)V^{\top}=\cov(S_k)-\cov(S_l)
\end{equation}
is a diagonal matrix.

\paragraph{Time-dependence signal}
In case of a time-dependence signal, the time-dependence of each
signal source $S^j_i$ changes over time, i.e., for fixed $\tau$,
$\cov(S_i^j,S_{i-\tau}^j)$ changes over time. These changes lead to
changes in the auto-covariance matrices
$\cov(X_i,X_{i-\tau})$. Analogous to the variance signal it holds for
all $i\in\{\tau+1,\dots,n\}$ that
\begin{equation*}
  V\cov(X_i,X_{i-\tau})V^{\top}=\cov(S_i,S_{i-\tau})+V\cov(H_i,H_{i-\tau})V^{\top}.
\end{equation*}
Since the source signal components $S_i^j$ are mutually
independent, the auto-covariance matrix $\cov(S_i,S_{i-\tau})$ is
diagonal and due the stationarity of $H$ (see
Assumption~\ref{assumption:group_structure}) the auto-covariance
$\cov(H_i,H_{i-\tau})$ is constant within each group. This implies for
all groups $g\in\groups$, for all $k,l\in g$ and for all $\tau$
that
\begin{equation}
  \label{eq:jointdiag_b}
  V\left(\cov(X_k,X_{k-\tau})-\cov(X_l,X_{l-\tau})\right)V^{\top}=\cov(S_k,S_{k-\tau})-\cov(S_l,S_{l-\tau})
\end{equation}
is a diagonal matrix.

\-\\
For both signal types, we can identify $V$ by simultaneously
diagonalizing differences of \mbox{(auto-)covariance}
matrices. Details and identifiability results are given in
Section~\ref{sec:estimation}. The two signal types considered differ
from both, the more classical settings of non-Gaussian
time-independent signals as considered for example by fastICA, and the
stationary signals with fixed time-dependence assumed for SOBI (cf.\
Table~\ref{table:ica_methods}). Owing to the non-stationarity of the
signal we can allow for more general forms of noise.

\subsection{Motivating Examples}

To get a better understanding of our proposed ICA model in
\eqref{eq:mixing}, we illustrate two different aspects: the
group structure and the noise model.

\paragraph{Noise model} \coroICA can be viewed as a noisy ICA, where the
noise is allowed to be group-wise non-stationary. This generalizes
existing noisy ICA methods, which, to the best of our knowledge, all
assume that the noise is independent over time with various additional
assumptions. The following example illustrates the intuition behind
our model via a toy-application to natural images.
\begin{example}[unmixing noisy images]\label{ex:image}
  We provide an illustration of how our proposed method compares to
  other ICA approaches under the presence of noise.  Four images, each
  $450\times 300$ pixels and with three RGB color channels, are used
  to construct four sources $S^1,S^2,S^3,S^4$ as follows.\footnote{The
    images are freely available from \citet{images}.}  Every color
  channel is converted to a one dimensional vector by cutting each
  image into $15\times 10$ equally sized patches (i.e., each patch
  consists of $30\times 30$ pixels) and concatenating the row-wise
  vectorized patches.  This procedure preserves the local structure of
  the image.  We concatenate the three color channels and
  consider them as separate groups for our model. Thus, each of the four
  sources $S^1,\dots,S^4$ consists of
  $n=3\cdot 450\cdot 300=405.000$ observations, that is, three groups of
  $135.000$ observations corresponding to the RGB color
  channels. Next, we construct locally dependent noise that differs
  across color channels. Here, locally dependent means that the added
  noise is similar (and dependent) for pixels which are close to each
  other. This results in four noise processes $H^1,\dots,H^4$. We
  combine the sources with the noise and apply a random mixing matrix
  $A$ to obtain the following observed data
  \begin{equation*}
    X=A\cdot S+H.
  \end{equation*}
  The recast noisy images $\widetilde{S}=S+A^{-1} H$ are illustrated
  in the first row and the recast observed mixtures $X$ in the second
  row of Figure~\ref{fig:image}. The last three rows are the resulting
  reconstructions of three different ICA procedures, \coroICATD, fastICA
  and choiICA\,(TD).  As expected, fastICA as a noise-free ICA method,
  appears frail to the noise in the images. While choiICA\,(TD) is able
  to adjust for independent noise, it is unable to properly adjust for
  the spatial dependence of the noise process and thus leads to
  undesired reconstruction results. In contrast, \coroICATD is able to
  recover the noisy images. It is the noise and its
  characteristics that break the two competing ICA methods, since all
  three methods are able to unmix the images in the noise-free case
  (not shown here).
\end{example}
\begin{figure}[t]
  \centering
  \includegraphics{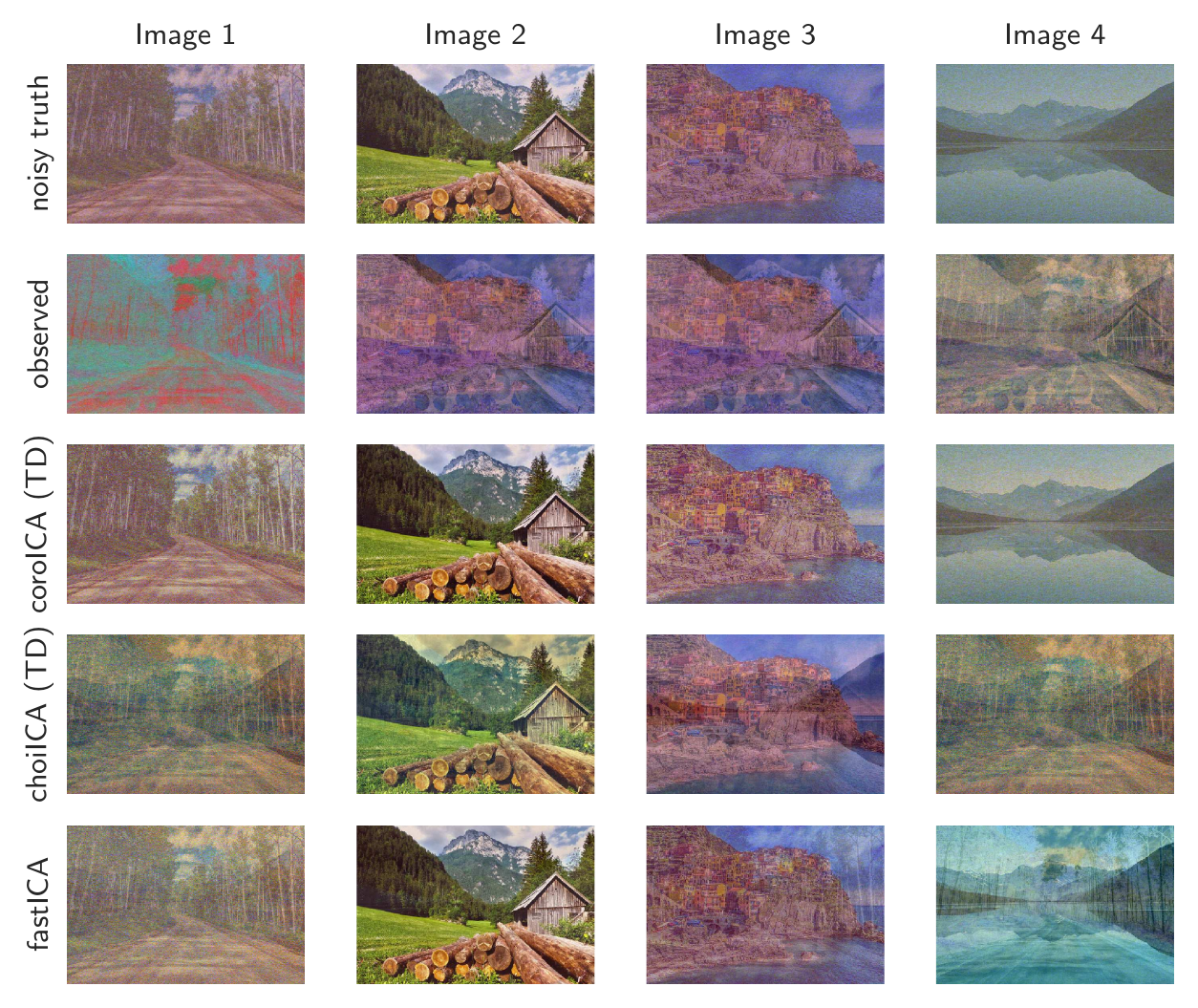}
  \caption{Images accompanying example~\ref{ex:image}. The top row shows
    noisy unmixed images, the second row shows mixed images, and the last three
    rows show unmixed and rescaled images resulting from an
    application of \coroICATD, choiICA\,(TD) and fastICA (cf.\ Table~\ref{table:ica_methods}).
    Here, only \coroICATD is able to correctly unmix the images and recover the original (noise-corrupted) images.}
  \label{fig:image}
\end{figure}
The noise model we employ is motivated by recent advances in causality
research where the group-wise stationary noise can be interpreted as
unobserved confounding factors in linear causal feedback models. We
describe this in more detail with an explicit example application to
Antarctic ice core data in Section~\ref{sec:causalinterpretation}.

\paragraph{Group structure}
A key aspect of our model is that it aims to leverage group-structure to improve
the stability of the umixing under the presence of group-wise confounding.
Here we refer to the following notion of stability:
A stable unmixing matrix extracts the same set of independent sources
when applied to the different groups; it is robust against the
confounding that varies across groups and introduces dependences.
A standard ICA method is not able to estimate the correct unmixing $V=A^{-1}$,
if the data generating process follows
our confounded ICA model in \eqref{eq:mixing}.
These methods extract signals that are not only corrupted by the group-wise
confounding but also are mixtures of the independent sources
and are thus not stable in the aforementioned sense. This
is illustrated by the ``America's Got Talent Duet Problem'' (cf.\
Example~\ref{ex:toy_example}), an extension and alteration of the
classical ``cocktail party problem''.

\begin{example}[America's Got Talent Duet Problem]
  \label{ex:toy_example}
  Consider the problem of evaluating two singers at a duet audition
  individually. This requires to listen to the two voices
  separately, while the singers perform simultaneously.  There are two
  sound sources in the audition room (the two singers) and
  additionally several noise sources which corrupt the recordings at
  the two microphones (or the jury member's two ears). A schematic of
  such a setting is illustrated in Supplement~\ref{sec:complementary},
  Figure~\ref{fig:crowdedroom}. The additional noise comes from an
  audience and two open windows. One can assume that this noise
  satisfies our Assumption~\ref{assumption:group_structure} on a
  single group. The sound stemming from the audience can be seen as an
  average of many sounds, hence remaining approximately
  stationary over time. Typical sounds from an open window also
  satisfy this assumption, for example sound from a river or
  a busy road.  Our methodology, however, also allows for more
  complicated settings in which the noise shifts at known points in
  times, for example if someone opens or closes a window or starts
  mowing the lawn outside. In such cases we use the known time blocks
  of stationary noise as groups and apply \coroICAvar on this grouped
  data. An example with artificial sound data related to this setting
  is available at \oururl. We show that \coroICAvar is able to recover
  useful sound signals with the two voices being separated into
  different dimensions and thus allows to listen to them individually.
  In contrast, existing ICAs applied to the time concatenated data fail to
  unmix the two singers.
\end{example}

\subsection{Identifiability}\label{sec:identifiable}

Identifiability requires that the source signals $S$ change
sufficiently strong within groups. The precise notion of a strong
signal depends on the type of signal. As discussed previously, we
consider two types of non-stationary signals (i) variance signals and
(ii) time-dependence signals. Depending on the signal type we
formalize two slightly different assumptions that characterize
source signals that ensure identifiability.
Firstly, in the case of a variance signal, we have the following
assumption.
\begin{assumption}[signals with independently changing variance]
  \label{assumption:var}
  For each pair of components $p, q\in\{1,\dots,d\}$ we require the
  existence of three (not necessarily unique) groups
  $g_1,g_2,g_3\in\groups$ and three corresponding pairs
  $l_1,k_1\in g_{1}$, $l_2,k_2\in g_{2}$ and \mbox{$l_3,k_3\in g_{3}$} such
  that the two vectors
  {\renewcommand*{\arraystretch}{1.3}
    \begin{equation*}
      \begin{pmatrix}
        \var\big(S^{p}_{l_1}\big)-\var\big(S^{p}_{k_1}\big) \\
        \var\big(S^{p}_{l_2}\big)-\var\big(S^{p}_{k_2}\big) \\
        \var\big(S^{p}_{l_3}\big)-\var\big(S^{p}_{k_3}\big)
      \end{pmatrix}
      \text{ and }
      \begin{pmatrix}
        \var\big(S^{q}_{l_1}\big)-\var\big(S^{q}_{k_1}\big) \\
        \var\big(S^{q}_{l_2}\big)-\var\big(S^{q}_{k_2}\big) \\
        \var\big(S^{q}_{l_3}\big)-\var\big(S^{q}_{k_3}\big)
      \end{pmatrix}
    \end{equation*}}%
  are neither collinear nor equal to zero.
\end{assumption}
In case of time-dependence signals we have the analogous assumption.
\begin{assumption}[signals with independently changing time-dependence]
  \label{assumption:time-dependence}
  For each pair of components $p, q\in\{1,\dots,d\}$ we require the
  existence of three (not necessarily unique) groups
  $g_1,g_2,g_3\in\groups$ and three corresponding pairs
  $l_1,k_1\in g_{1}$, $l_2,k_2\in g_{2}$ and \mbox{$l_3,k_3\in g_{3}$} for
  which there exists $\tau\in\{1,\dots,n\}$ such that the two vectors
  {\renewcommand*{\arraystretch}{1.3}
    \begin{equation*}
      \begin{pmatrix}
        \cov\big(S^{p}_{l_1},S^{p}_{l_1-\tau}\big)-\cov\big(S^{p}_{k_1},S^{p}_{k_1-\tau}\big) \\
        \cov\big(S^{p}_{l_2},S^{p}_{l_2-\tau}\big)-\cov\big(S^{p}_{k_2},S^{p}_{k_2-\tau}\big) \\
        \cov\big(S^{p}_{l_3},S^{p}_{l_3-\tau}\big)-\cov\big(S^{p}_{k_3},S^{p}_{k_3-\tau}\big)
      \end{pmatrix}
      \text{ and }
      \begin{pmatrix}
        \cov\big(S^{q}_{l_1},S^{q}_{l_1-\tau}\big)-\cov\big(S^{q}_{k_1},S^{q}_{k_1-\tau}\big) \\
        \cov\big(S^{q}_{l_2},S^{q}_{l_2-\tau}\big)-\cov\big(S^{q}_{k_2},S^{q}_{k_2-\tau}\big) \\
        \cov\big(S^{q}_{l_3},S^{q}_{l_3-\tau}\big)-\cov\big(S^{q}_{k_3},S^{q}_{k_3-\tau}\big)
      \end{pmatrix}
    \end{equation*}}%
  are neither collinear nor equal to zero.
\end{assumption}
Intuitively, these assumptions ensure that the signals are not
changing in exact synchrony across components, which removes
degenerate types of signals. In particular, they are satisfied in the
case that the variance or auto-covariance processes change pair-wise
independently over time. Whenever one of these assumptions is
satisfied, the mixing matrix $A$ is uniquely identifiable.
\begin{theorem}[identifiability of the mixing matrix]\label{thm:mixing}\-\\
  Assume the data process $(X_i)_{i\in\{1,\dots,n\}}$ satisfies the
  model in \eqref{eq:mixing} and
  Assumption~\ref{assumption:group_structure} holds. If additionally
  either Assumption~\ref{assumption:var} or
  Assumption~\ref{assumption:time-dependence} is satisfied, then $A$
  is unique up to permutation and rescaling of its columns.
\end{theorem}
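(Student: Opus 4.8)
The plan is to prove the variance-signal case under Assumption~\ref{assumption:var}; the time-dependence case under Assumption~\ref{assumption:time-dependence} is identical after replacing covariances by auto-covariances throughout. The content of the theorem is that the joint-diagonalization constraints recorded in \eqref{eq:jointdiag_a} pin down $A^{-1}$ up to permutation and column scaling. So I would fix the finite family of \emph{difference matrices} $M_s := \cov(X_{k})-\cov(X_{l})$ ranging over all groups $g\in\groups$ and all $k,l\in g$ (indexed by $s$). Using \eqref{eq:mixing} together with Assumption~\ref{assumption:group_structure}, the confounding contribution cancels within each group, so $M_s = A D_s A^\top$ with $D_s := \cov(S_{k})-\cov(S_{l})$ diagonal (diagonal because the source components are mutually independent). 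Now I would take an \emph{arbitrary} invertible $V$ for which $V M_s V^\top$ is diagonal for every $s$, set $B := VA$, and observe that $B D_s B^\top = V M_s V^\top$ is diagonal for all $s$. The whole statement then reduces to showing that $B$ is a generalized permutation (monomial) matrix: from $B=P\Lambda$ we get $V = P\Lambda A^{-1}$, i.e.\ $V$ equals $A^{-1}$ up to a row permutation and rescaling, equivalently $A$ is determined up to permutation and rescaling of its columns.

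Next I would translate the assumption into a statement about the diagonal profiles. For each column index $r$ collect the diagonal entries $(D_s)_{rr}=\var(S_{k}^r)-\var(S_{l}^r)$ into a profile vector $\lambda_r$ indexed by $s$. Assumption~\ref{assumption:var} says exactly that for every pair $p\neq q$ there are three indices $s$ (coming from three groups and pairs) on which the restrictions of $\lambda_p$ and $\lambda_q$ are non-collinear and nonzero; a fortiori the \emph{full} profiles $\lambda_p,\lambda_q$ are non-proportional and nonzero. Note that $M_s=AD_sA^\top$ is symmetric in both signal types — in the time-dependence case the non-symmetric parts come only from the auto-covariance of $H$, which cancels within a group, while $D_s$ is diagonal — so the reduction is uniform.

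The crux, and the step I expect to be the main obstacle, is the joint-diagonalization uniqueness lemma: \emph{if $B$ is invertible, $BD_sB^\top$ is diagonal with $D_s$ diagonal, and the column profiles $\lambda_r$ are pairwise non-proportional and nonzero, then $B$ is monomial.} The naive off-diagonal equations $\sum_r B_{ir}B_{jr}\lambda_r = 0$ (for $i\neq j$) only rule out two rows whose supports overlap in one or two indices and do not by themselves exclude larger overlaps, so a brute-force support count does not close the argument. Instead I would use a dual-basis/eigenvector trick. Writing $BD_sB^\top = \operatorname{diag}(\mu^{(s)})$, each standard basis vector $\mathbf e_j$ is an eigenvector, hence $D_s(B^\top\mathbf e_j) = \mu^{(s)}_j (B^{-1}\mathbf e_j)$; reading off coordinate $r$ gives $(D_s)_{rr}B_{jr} = \mu^{(s)}_j (B^{-1})_{rj}$ for all $r,s$. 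Consequently, if a fixed row $j$ had two nonzero entries $B_{jp},B_{jq}\neq 0$, then $\lambda_p = \tfrac{(B^{-1})_{pj}}{B_{jp}}(\mu^{(s)}_j)_s$ and $\lambda_q = \tfrac{(B^{-1})_{qj}}{B_{jq}}(\mu^{(s)}_j)_s$ would both be scalar multiples of the single vector $(\mu^{(s)}_j)_s$, hence proportional to each other (and the nonvanishing of $\lambda_p$ forces this common vector to be nonzero) — contradicting non-proportionality. Thus every row of $B$ has at most one nonzero entry, and invertibility upgrades this to exactly one nonzero entry per row and column, i.e.\ $B=P\Lambda$.

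Finally I would assemble the pieces: $B=P\Lambda$ gives $V = P\Lambda A^{-1}$, so every admissible unmixing agrees with $A^{-1}$ up to permutation and scaling, which is the claimed uniqueness of $A$ up to permutation and rescaling of its columns. Repeating the argument verbatim with \eqref{eq:jointdiag_b} and Assumption~\ref{assumption:time-dependence} in place of \eqref{eq:jointdiag_a} and Assumption~\ref{assumption:var} settles the time-dependence case and completes the proof.
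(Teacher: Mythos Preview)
Your proof is correct and follows the same overall architecture as the paper: pass to the difference matrices $M_s$, cancel the confounding via Assumption~\ref{assumption:group_structure} to write $M_s=AD_sA^\top$ with $D_s$ diagonal, and then argue that any invertible $V$ jointly diagonalizing all $M_s$ must satisfy $VA=P\Lambda$ for a generalized permutation $P\Lambda$, using that the diagonal profiles $\lambda_r$ are pairwise non-proportional and nonzero (which is exactly the content of Assumption~\ref{assumption:var} or~\ref{assumption:time-dependence}).

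The one genuine difference is in how the key joint-diagonalization uniqueness lemma is established. The paper does not prove it from scratch but invokes Theorem~1 of Kleinsteuber--Shen (2013), which states that an invertible $M$ with $M^\top D_i M$ diagonal for all $i$ is essentially unique if and only if the collinearity measure $\rho(D_1,\dots,D_m)<1$; the non-collinearity assumptions translate directly into $\rho<1$, and the identity matrix is one diagonalizer, so any other must equal it up to permutation/scaling. You instead give a short self-contained argument via the identity $(D_s)_{rr}B_{jr}=\mu_j^{(s)}(B^{-1})_{rj}$, from which two nonzero entries in a row of $B$ force the corresponding profiles to be scalar multiples of the common vector $(\mu_j^{(s)})_s$, contradicting non-proportionality. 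Your route is more elementary and avoids the external citation; the paper's route is shorter and appeals to an existing result. Both handle the time-dependence case identically, and your remark that the $H$-contribution to the asymmetric part of the auto-covariance cancels within groups (so $M_s$ is still of the form $AD_sA^\top$ with $D_s$ diagonal) is correct and worth keeping.
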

\begin{proof}
A proof is given in Supplement~\ref{sec:proofs}.
\end{proof}

\subsection{Estimation}\label{sec:estimation}

In order to estimate $V$ from a finite observed sample
$\vX \in \R^{d \times n}$, we first partition each group into
subgroups. We then compute the empirical (auto-)covariance matrices on
each subgroup. Finally, we estimate a matrix that simultaneously
diagonalizes the differences of these empirical (auto-)covariance
matrices using an approximate joint matrix diagonalization
technique. This procedure results in three methods depending on which
type of matrices we diagonalize. Similar to our notation for the
different versions of choiICAs we denote these methods by
\coroICAvar if we diagonalize differences of covariances,
\coroICATD if we diagonalize differences of auto-covariances, and
\coroICAvarTD if we diagonalize both differences of covariance
and auto-covariances.

More precisely, for each group $g\in\groups$, we first
construct a partition $\mathcal{P}_g$ consisting of subsets of $g$
such that each $e\in\mathcal{P}_g$ satisfies that $e\subseteq g$ and
$\cup_{e\in\mathcal{P}_g}e=g$. This partition $\mathcal{P}_g$ should
be granular enough to capture the changes in the signals described in
Assumption~\ref{assumption:var} or~\ref{assumption:time-dependence}.
We propose partitioning each group based on a grid such that the
separation between grid points is large enough for a reasonable
estimation of the covariance matrix and at the same time small enough
to capture variations in the signals. In our experiments, we observed
robustness with respect to the exact choice; only too small partitions
should be avoided since otherwise the procedure is fragile due to
poorly estimated covariance matrices. More details on the choice of
the partition size are given in Remark~\ref{rmk:partition}. Depending
on whether a variance or time-dependence signal or a hybrid thereof is
considered, we fix time lags $\Tau \subset \N_0$.

Next, for each group $g\in\groups$, each distinct pair
$e,f\in\mathcal{P}_g$, and each $\tau \in \Tau$ we define the matrix
\begin{equation*}
  M^{g,\tau}_{e,f}\coloneqq \empcov_\tau(\vX_e)-\empcov_\tau(\vX_f),
\end{equation*}
where $\empcov_\tau(\cdot)$ denotes the empirical (auto-)covariance
matrix for lag $\tau$ and $\vX_e$ is the data matrix restricted to the
columns corresponding to the subgroup
$e$. Assumption~\ref{assumption:group_structure} ensures that
$V M^{g,\tau}_{e,f} V^{\top}$ is approximately diagonal. We are
therefore interested in finding an invertible matrix $V$ which
approximately jointly diagonalizes the matrices in the set
\begin{equation}
  \label{eq:comparison_set_all}
  \mathcal{M}^{\operatorname{all}}
  \coloneqq\big\{M^{g,\tau}_{e,f}
  \,\big\rvert\, g\in\groups
  \text{ and } e,f\in\mathcal{P}_g \text{ and } \tau\in\Tau \big\}.
\end{equation}
The number of matrices in this set grows quadratically in the number of
partitions. This can lead to large numbers of
matrices to be diagonalized. Another option that reduces the computational load is
to compare each partition to its complement, which
leads to the following set of matrices
\begin{equation}
  \label{eq:comparison_set_comp}
  \mathcal{M}^{\operatorname{comp}}
  \coloneqq\big\{M^{g,\tau}_{e, \bar{e}} \,\big\rvert\, g\in\groups
  \text{ and } e\in\mathcal{P}_g \,(\text{with }\bar{e}\coloneqq g\setminus e) \text{ and } \tau\in\Tau\big\}
\end{equation}
or to compare only neighboring partitions as in
\begin{equation}
  \label{eq:comparison_set_neighbor}
  \mathcal{M}^{\operatorname{neighbor}}
  \coloneqq\big\{M^{g,\tau}_{e,\operatorname{neighbor}(e)} \,\big\rvert\, g\in\groups
  \text{ and } e\in\mathcal{P}_g \text{ and } \tau\in\Tau\big\},
\end{equation}
where $\operatorname{neighbor}(e)$ is the partition to the right of
$e$.

The task of jointly diagonalizing a set of matrices is a
well-studied topic in the literature and is referred to as approximate
joint matrix diagonalization.  Many solutions have been proposed for
different assumptions made on the matrices to be diagonalized. In this
paper, we use the \textsf{uwedge} algorithm\footnote{As a byproduct of
  our work, we are able to provide a new stable open-source
  Python/R/Matlab implementation of the \textsf{uwedge} algorithm
  which is also included in our respective \coroICA packages.}
introduced by \citet{tichavsky2009}. The basic idea behind
\textsf{uwedge} is to find a minimizer of a proxy for the
loss function
\begin{equation*}
  \ell(V)=\sum_{M\in\mathcal{M}^{*}}\left(\sum_{k\neq l}\big[V M V^{\top}\big]_{k,l}^2\right),
\end{equation*}
over the set of invertible matrices, where in our case
$\mathcal{M}^* \in \{\mathcal{M}^{\operatorname{all}},
\mathcal{M}^{\operatorname{comp}},\mathcal{M}^{\operatorname{neighboring}}\}$.

The full estimation procedure based on the set
$\mathcal{M}^{\operatorname{neighbouring}}$ defined in \eqref{eq:comparison_set_comp}
is made explicit in the pseudo code in Algorithm~\ref{alg:coroICA}
(where ApproximateJointDiagonalizer stands for a general approximate joint diagonalizer;
here we use \textsf{uwedge}).

\begin{remark}[choosing the partition and the lags]
  \label{rmk:partition}
  Whenever there is no obvious partition of the data, we propose to
  partition the data into equally sized blocks with a fixed partition
  size. The decision on how to choose a partition size should be
  driven by type of non-stationary signal one expects and the
  dimensionality of the data. For example, in the case of a variance signal
  the partition should be fine enough to capture areas of high
  and low variance, while at the same time being coarse enough to
  allow for sufficiently good estimates of the covariance matrices. That
  said, for applications to real data sets the signals are often of
  various length implying that there is a whole range of partition
  sizes which all work well. In cases with few data points, it can
  then be useful to consider several grids with different partition
  sizes and diagonalize across all resulting differences
  simultaneously. This somewhat removes the dependence of the results
  on the exact choice of a partition size and increases the power of
  the procedure.  We employ this approach in Section~\ref{ex:climate}.
  In general, the lags $\Tau$ should be chosen as $\Tau = \{0\}$,
  $\Tau \subset \N$, or $\Tau \subset \N_0$, depending on whether a
  variance signal, time-dependence signal, or a hybrid thereof is
  considered.  For time-dependence signal, we recommend to determine
  up to which time-lag the autocorrelation of the observed signals has
  sufficiently decayed, and use all lags up to that point.
\end{remark}

\begin{algorithm}[h]
\SetAlgoLined
\SetKwInOut{Input}{input}
\SetKwInOut{Output}{output}
\Input{data matrix $\vX$\newline
 group index $\groups$ (user selected)\newline
 group-wise partition $(\mathcal{P}_g)_{g\in\groups}$ (user selected)\newline
 lags $\Tau \subset \N_0$ (user selected)
 }

 initialize empty list $\mathcal{M}$

\For{$g\in\groups$}{
  \For{$e\in\mathcal{P}_g$}{
    \For{$\tau\in\Tau$}{
      append $\empcov_\tau(\vX_e)-\empcov_\tau(\vX_{\operatorname{neighbour}(e)})$ to list $\mathcal{M}$
    }
  }
 }
 $\widehat{V}\gets\operatorname{ApproximateJointDiagonalizer}(\mathcal{M})$

 $\widehat{\vS}\gets \widehat{V}\vX$

 \Output{unmixing matrix $\widehat{V}$\newline
         sources $\widehat{\vS}$}
 \caption{\coroICA}
 \label{alg:coroICA}
\end{algorithm}

\subsection{Assessing the Quality of Recovered Sources}\label{sec:scores}

Assessing the quality of the recovered sources in an ICA setting is an
inherently difficult task, as is typical for unsupervised learning
procedures. The unidentifiable scale and ordering of
the sources as well as the unclear choice of a performance measure
render this task difficult. Provided that ground truth is known,
several scores have been proposed, most notably the
Amari measure introduced by \citet{amari1996} and the minimum distance
(MD) index due to \citet{ilmonen2010}. Here, we use the MD
index, which is defined as
\begin{equation*}
  \operatorname{MD}(\hat{V}, A) = \frac{1}{\sqrt{p-1}}\inf_{C\in\mathcal{C}}\norm{C\hat{V}A-\identity},
\end{equation*}
where the set $\mathcal{C}$ consists of matrices for which each row
and column has exactly one nonzero element. Intuitively, this score
measures how close $\hat{V}A$ is to a rescaled and permuted version of
the identity matrix. One appealing property of this score is that it
can be computed efficiently by solving a linear sum assignment problem. In
contrast to the Amari measure, the MD index is affine invariant and
has desirable theoretical properties \citep[see][]{ilmonen2010}.

We require a different performance measure for our real data
experiments where the true unmixing matrix is unknown. Here, we check
whether the desired independence (after adjustment for the constant
confounding) is achieved by computing the following covariance
instability score (CIS) matrix. It measures the instability of the
covariance structure of the unmixed sources $\widehat{\vS}$ and is
defined for a each groups $g\in\groups$ and a corresponding partition
$\mathcal{P}_g$ (see Section~\ref{sec:estimation}) by
\begin{equation*}
  \operatorname{CIS}(\widehat{\vS},\mathcal{P}_g)\coloneqq
  \frac{1}{\abs{\mathcal{P}_g}}
  \sum_{e\in\mathcal{P}_g}\left(\frac{\empcov(\widehat{\vS}_e)-\empcov(\widehat{\vS}_{\operatorname{neighbour}(e)})}{\widehat{\sigma}_{\widehat{\vS}_g}\cdot\widehat{\sigma}_{\widehat{\vS}_g}^{\top}}\right)^2,
\end{equation*}
where $\widehat{\sigma}_{\widehat{\vS}}\in\R^{d\times 1}$ is the
empirical standard deviation of $\widehat{\vS}$ and the fraction is
taken element-wise. The CIS matrix is approximately diagonal whenever
$\widehat{\vS}$ can be written as the sum of independent source
signals $\vS$ and confounding $\vH$ with fixed covariance.  This is
condensed into one scalar that reflects how stable the sources'
covariance structure is by averaging the off-diagonals of the CIS
matrix
\begin{equation*}
  \operatorname{MCIS}(\widehat{\vS}, \mathcal{P}_g)^2\coloneqq
  \frac{1}{d(d-1)}\sum_{\substack{i,j=1\\i\neq j}}^d\left[\operatorname{CIS}(\widehat{\vS}, \mathcal{P}_g)\right]_{i,j}.
\end{equation*}
The differences taken in the CIS score extract the variance signals
such that the mean covariance instability score (MCIS) can be
understood as a measure of independence between the recovered variance signal processes.
High values of MCIS imply strong dependences beyond stationary
confounding between the signals. Low values imply weak
dependences. MCIS is a reasonable score whenever there is a \emph{variance
  signal} (as described in Section~\ref{sec:methodology}) in sources
and is a sensible evaluation metric of ICA procedures in such cases.
In case of time-dependence signal (as described in Section~\ref{sec:methodology}),
one can define an analogous score based on the auto-covariances.
Here, we restrict ourselves to the variance signal case as for all our
applications this appeared to constitute the dominant part of the signal.

In case of \emph{variance signals} the MCIS appears natural and appropriate
as independence measure: It measures
how well the individual variance signals (and hence the relevant information)
are separated. To get a better intuition, let
$A=(a_1,\dots,a_d)\in\R^{d\times d}$ denote the mixing and
$V=(v_1,\dots,v_d)^\top\in\R^{d\times d}$ the corresponding unmixing
matrix (i.e., $V=A^{-1}$, $a_i$ are columns of $A$ and $v_i$ are rows
of $V$).  Then it holds that,
\begin{align}
    \label{eq:covXv}
    \nonumber
  \cov(X_i)v_j^{\top}&=A\cov(S_i)A^{\top}v_j^{\top}+\cov(H_i)v_j^{\top}\\
    \nonumber
  &=A\cov(S_i)e_j^{\top}+A\cov(H_i)e_j^{\top}\\
  &=a_j\var(S_i^j)+A\cov(H_i)e_j^{\top}
\end{align}
Under our group-wise stationary confounding assumption
(Assumption~\ref{assumption:group_structure}) this implies that within
all groups $g\in\groups$, it holds for all $l,k\in g$ that
\begin{equation}
  \label{eq:stable_comp2}
  \left(\cov(X_l)-\cov(X_k)\right)v_j^{\top}
  =a_j\left(\var(S^j_l)-\var(S^j_k)\right).
\end{equation}
This equation holds also in the confounding-free case and it reflects the
contribution of the signal (in terms of variance signal) of the $j$-th recovered
source $S^j$ to the the variance signal in all components of the
observed multivariate data $X$.

While in the population case the equality in \eqref{eq:stable_comp2}
is satisfied exactly, this is no longer the case when the (un-)mixing
matrix is estimated on finite data.  Consider
two subsets $e, f\in g$ for some group $g\in\groups$, then using the
notation from Section~\ref{sec:estimation} and denoting by
$\widehat{v}_j$ and $\widehat{a}_j$ the estimates of $v_j$ and $a_j$,
respectively, it holds that
\begin{align}
  \label{eq:empcovXv}
  \nonumber
  M_{e,f}^g\widehat{v}_j^{\top}
  &=\big[\empcov(\vX_e)-\empcov(\vX_f)\big]\widehat{v}_j^{\top}\\
  \nonumber
  &=\widehat{A}\big[\empcov(\widehat{S}_e)-\empcov(\widehat{S}_f)\big]\widehat{A}^{\top}\widehat{v}_j^{\top}\\
  &=\widehat{A}\big[\empcov(\widehat{S}_e)-\empcov(\widehat{S}_f)\big]e_j^{\top}\nonumber\\
  &\approx \widehat{a}_j(\var(S_{e}^j)-\var(S_{f}^j)).
\end{align}
The approximation is close only if the empirical estimate
$\widehat{V}$ correctly unmixes the $j$-th
source. Essentially, MCIS measures the extent to which this
approximation holds true for all components simultaneously across the subsets
specified by the partition $\mathcal{P}_g$. It is also
possible to consider individual components by assessing how
closely the following proportionality is satisfied
\begin{equation}
  \label{eq:component_instability}
  \sum_{M\in\mathcal{M}^*}\operatorname{sign}(\widehat{v}_jM\widehat{v}_j^{\top}) M\widehat{v}_j^{\top}\propto \widehat{a}_j.
\end{equation}\label{eq:activation_map}
In EEG experiments,
this can also be assessed visually by comparing the topographic
maps corresponding to columns of A with so-called activation maps corresponding to the
left-hand side in \eqref{eq:component_instability}. More details on
this are provided in Section~\ref{sec:topographic_maps}.

\FloatBarrier

\section{Causal Perspective}\label{sec:causalinterpretation}

Our underlying noisy ICA model \eqref{eq:mixing} and the assumption on
the noise (Assumption~\ref{assumption:group_structure}) are motivated
by causal structure learning scenarios. ICA is closely linked to the problem of identifying
structural causal models (SCMs) \citep[see][]{Pearl2009, Imbens2015,
  Peters2017}. \citet{shimizu2006} were the first to make this
connection explicit and used ICA to infer causal structures. To make this more
precise consider the following linear SCM
\begin{equation}
  \label{eq:linearSCM}
  X_i=B\cdot X_i +\widetilde{S}_i,
\end{equation}
where $X_i$ are observed covariates and $\widetilde{S}_i$ are noise
terms. An SCM induces a corresponding causal graph over the involved
variables by drawing an edge from variables on the right-hand side to
the one on the left-hand side of \eqref{eq:linearSCM}. Moreover, we can define noise
interventions \citep{Pearl2009} by allowing the distributions of the
noise terms $\widetilde{S}_i$ to change for different $i$. In the
language of ICA, this means that the signals $\widetilde{S}_i$ encode
the different interventions (over time) on the noise
variables. Assuming that the matrix $\identity-B$ is invertible, we
can rewrite \eqref{eq:linearSCM} as
\begin{equation*}
  X_i=(\identity-B)^{-1} \widetilde{S}_i,
\end{equation*}
which can be viewed as an ICA model with mixing matrix
$A=(\identity-B)^{-1}$. Instead of taking the noise term
$\widetilde{S}_i$ as independent noise sources one can also consider
$\widetilde{S}_i=S_i+H_i$. In that case the linear SCM in
\eqref{eq:linearSCM} describes a causal model between the observed
variables $X_i$ in which hidden confounding is allowed. This is
illustrated in Figure~\ref{fig:SCMillustration}, which depicts a 3
variable SCM with feedback loops and confounding. Learning a causal
model as in \eqref{eq:linearSCM} with ICA is generally done by
performing the following two steps.
\begin{enumerate}[(i)]
\item \textbf{(ICA)} The matrix $(\identity-B)$ is inferred by ICA up to an
  undefined scale and permutation of its rows by using an appropriate
  ICA procedure. This step is often infeasible in the presence of
  confounding $H$ since existing ICA methods only allow noise
  under restrictive assumptions (cf. Table~\ref{table:ica_methods}).
\item \textbf{(identify $\mathbf{B}$)} There are essentially two
  assumptions that one can make in order for this to work. The first
  is to assume the underlying causal model has an acyclic structure as in
  \citet{shimizu2006}. In such cases the matrix $B$ needs to be permuted
  to an upper triangular matrix. The second option is to
  allow for feedback loops in the causal model but restrict the types
  of feedback to exclude infinite loops as in
  \citet{hoyer2008} and \citet{rothenhausler2015}.
\end{enumerate}
When performing step (i) there are two important modeling assumptions
that are made when selecting the ICA procedure: (a) the type of
allowed signals (types of interventions) and (b) the type of allowed
confounding. For the classic ICA setting with non-Gaussian source
signals and no noise this translates to the class of linear
non-Gaussian models, such as Linear Non-Gaussian Acyclic Models
(LiNGAMs) introduced by \citet{shimizu2006}. While such models are a
sensible choice in a purely observational setting (i.e., no samples
from interventional settings) they are somewhat misspecified in terms
of (a) when data from different interventional settings or
time-continuous intervention shifts are observed (see
Remark~\ref{rmk:intervention}). In those settings, it is more natural
to use ICA methods that are tailored to sequential shifts as for
example choiICA or \coroICA. Moreover, most common ICA methods
consider noise-free mixing, which from a causal perspective implies
that no hidden confounding is allowed. While noisy ICA weakens this
assumption, existing methods only allow for time-independent or even
iid noise, which again greatly restricts the type of confounding. In
contrast, our proposed \coroICA allows for any type of block-wise
stationary confounding, hence greatly increasing the class of causal
models which can be inferred. This is attractive for causal modeling
as it is a priori unknown whether hidden confounding
exists. Therefore, our proposed procedure allows for robust causal
inference under general confounding settings. In
Section~\ref{ex:climate}, we illustrate a potential application to
climate science and how the choice of ICA can have a strong impact on
the estimates of the causal parameters.

\begin{remark}[relation between interventions and non-stationarity]
  \label{rmk:intervention}
  A causal model does not only describe the observational distribution
  but also the behavior of the data generating model under all of the
  allowed interventions. Here, we restrict the allowed interventions
  to distribution shifts in the source signals, that either change the
  distribution block-wise (e.g., abruptly changing environmental
  conditions) or continuously (e.g., continuous shifts in the
  environmental conditions). Any such shifts are by definition
  synonymous with the process $S_i$ being non-stationary. In our
  proposed causal model \eqref{eq:linearSCM} the non-stationarity of the
  signal therefore corresponds to shifts in the environmental
  conditions which can be utilized, using \coroICA, to
  infer the underlying causal structure. From this perspective, the causal
  inference procedure we propose here is a method based on
  interventional data rather than plainly observational data, while the interventions
  are not exactly known.
\end{remark}

\begin{figure}[t]
    \centering
    \hfill
\begin{minipage}{0.5\textwidth}
  \scalebox{0.6}{
    \begin{tikzpicture}[framed, background
      rectangle/.style={thick, draw=black,
        rounded corners}, scale=5]
      \tikzstyle{VertexStyle} = [shape = circle, minimum width = 3em,draw]
      \SetGraphUnit{2}
      \Vertex[Math,L=X^1,x=-0.6,y=0]{X1}
      \Vertex[Math,L=X^2,x=0.6,y=0]{X2}
      \Vertex[Math,L=X^3,x=0,y=0.6]{X3}
      \Vertex[Math,L=S^1,x=-1,y=0]{S1}
      \Vertex[Math,L=S^2,x=1,y=0]{S2}
      \Vertex[Math,L=S^3,x=0,y=1]{S3}
      \tikzstyle{VertexStyle} = [shape = circle, minimum width =
      3em,draw,color=colA]
      \Vertex[Math,L=H^1,x=-0.6,y=0.6]{H1}
      \Vertex[Math,L=H^2,x=0.6,y=0.6]{H2}
      \tikzstyle{EdgeStyle} = [->,>=stealth',shorten > = 2pt]
      \Edge(S1)(X1)
      \Edge(S2)(X2)
      \Edge(S3)(X3)
      \Edge(X1)(X3)
      \Edge(X3)(X2)
      \tikzset{EdgeStyle/.append style = {->, bend left}}
      \Edge(X1)(X2)
      \Edge(X2)(X1)
      \tikzstyle{EdgeStyle} = [->,>=stealth',shorten > = 2pt, color=colA]
      \Edge(H1)(X1)
      \Edge(H1)(X3)
      \Edge(H2)(X2)
      \Edge(H2)(X3)
    \end{tikzpicture}
  }
\end{minipage}%
\hfill
\begin{minipage}{0.5\textwidth}
  \begin{mdframed}[roundcorner=5pt]
    \begin{align*}
      X^1 &\leftarrow b_{1,2}X^2 &&+ S^1+\textcolor{colA}{H^1}\\
      X^2 &\leftarrow b_{2,1}X^1 + b_{2,3}X^3 &&+ S^2+\textcolor{colA}{H^2}\\
      X^3 &\leftarrow b_{3,1}X^1 &&+ S^3+\textcolor{colA}{H^1}+\textcolor{colA}{H^2}
    \end{align*}
    \phantom{1}
  \end{mdframed}
\end{minipage}
\hfill
    \caption{Illustration of an SCM with (including colored nodes \textcolor{colA}{$H^1$}, \textcolor{colA}{$H^2$})  and
  without (excluding colored nodes) confounding.}
\label{fig:SCMillustration}
\end{figure}
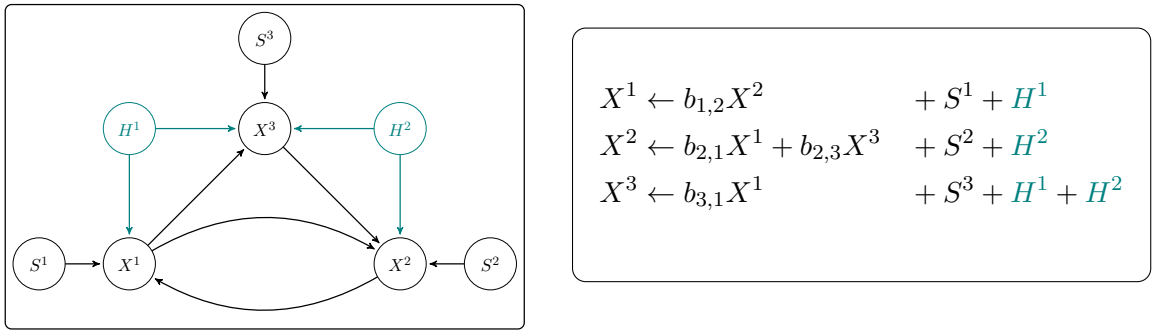

\subsection{Application to Climate Science}\label{ex:climate}

To motivate the foregoing causal model we consider a prominent
example from climate science: the causal relationship between carbon
dioxide concentration ($\text{CO}_2$) and temperature (T). More precisely, we
consider Antarctic ice core data that consists of
temperature and carbon dioxide measurements of the past 800'000
years due to \citet[carbon dioxide]{Bereiter2014} and
\citet[temperature]{jouzel2007orbital}. We combined both temperature
and carbon dioxide data and recorded measurements every 500 years by
a cubic interpolation of the raw data. The data is shown in
Figure~\ref{fig:climate_results} (right). Oversimplifying,
one can model this data as an SCM with time-lags as follows
\begin{align}
  \begin{pmatrix}
    \COtwo_t\\
    \Temp_t
  \end{pmatrix}
  =
  \underbrace{\begin{pmatrix}
    0 & \beta\\
    \alpha & 0
  \end{pmatrix}}_{=B_0}
  \begin{pmatrix}
    \COtwo_t\\
    \Temp_t
  \end{pmatrix}
  + \sum_{k=1}^p B_k
  \begin{pmatrix}
    \COtwo_{t-k}\\
    \Temp_{t-k}
  \end{pmatrix}
  +
  \widetilde{S}_t,\label{eq:SVAR_climate}
\end{align}
where $\widetilde{S}_t=S_t+H_t$ with $S_t$ component-wise independent
non-stationary source signals and $H_t$ a stationary confounding
process. Vector-valued linear time-series models of this type are
referred to as structural auto regressive models (SVARs) \citep[see
e.g.,][]{lutkepohl2005}. They have been previously analyzed in the
confounding free-case by \citet{hyvarinen2010estimation}, using an ICA
based causal inference approach. A graphical representation of such a
model is shown in Supplement~\ref{sec:causal_appendix},
Figure~\ref{fig:graph}. In this example, we can think of the source
signals $S_t$ as being two independent summaries of important factors
that affect both temperature and carbon dioxide and vary over time,
e.g., environmental catastrophes like volcano eruptions and large
wildfires, sunspot activity or ice-coverage. These variations can be
considered as changing environmental conditions or interventions (see
Remark~\ref{rmk:intervention}). On the other hand the stationary
confounding process $H_t$ can be thought of as factors which affect
both temperature and carbon dioxide in a constant fashion over time,
for example this could be effects due the shifts in the earth's
rotation axis.

Assuming that this was the true underlying causal model, we could use
it to predict what happens under interventions.  From a climate
science perspective an interesting intervention is given by doubling
the concentration of $\text{CO}_2$ and determining the resulting
instantaneous (faster than 1000 years) effect on the temperature. This
effect is commonly referred to as equilibrium climate sensitivity
(ECS) due to $\text{CO}_2$ which is loosely defined as the change in
degrees temperature associated with a doubling of the concentration of
carbon dioxide in the earth's atmosphere. In the fifth assessment
report of the United Nations Intergovernmental Panel on Climate Change
it has been stated that "there is high confidence that ECS is
extremely unlikely less than 1~\degree C and medium confidence that
the ECS is likely between 1.5~\degree C and 4.5~\degree C and very
unlikely greater than 6~\degree C"
\citep[Chapter~10]{stocker2014climate}.  Since the measurement
frequency in our model is quite low (500 years) and we model the
logarithm of carbon dioxide the ECS corresponds to
\begin{equation*}
  \text{ECS} = \log(2)\alpha.
\end{equation*}
Estimating the model in \eqref{eq:SVAR_climate} can be done by first
fitting a vector auto-regressive model of the time lags using OLS
resulting in a vector of residuals
\begin{equation*}
  R_t=
  \begin{pmatrix}
    \COtwo_t\\
    \Temp_t
  \end{pmatrix} -
  \begin{pmatrix}
    \widehat{\COtwo_t}\\
    \widehat{\Temp_t}.
  \end{pmatrix}
\end{equation*}
Then, one can apply the two-step causal inference procedure described
in Section~\ref{sec:causalinterpretation} to
\begin{equation*}
R_t=B_0R_t+\widetilde{S}_t.
\end{equation*}
Since we are in a two-dimensional setting, step (ii) (i.e.,
identifying the causal parameters $\alpha$ and $\beta$ from the
estimated mixing matrix) only requires to assume that feedback loops
do not blow-up, which translates into $B_0$ having spectral norm less than one.
Given that the signal is sufficiently strong (i.e.,
there are sufficient interventions on both $\text{CO}_2$ and $T$), it
is possible to recover the causal parameters by trying both potential
permutations of the sources with subsequent scaling and assessing
whether the aforementioned condition is satisfied.

We applied this procedure based on \coroICAvar to the data in order to
estimate climate sensitivity and compared it with results obtained
when using fastICA or choiICA\,(var). The results are given in
Figure~\ref{fig:climate_results}.
\begin{figure}
  \centering
  \begin{minipage}{0.575\textwidth}
    \resizebox{\textwidth}{!}{
      \includegraphics{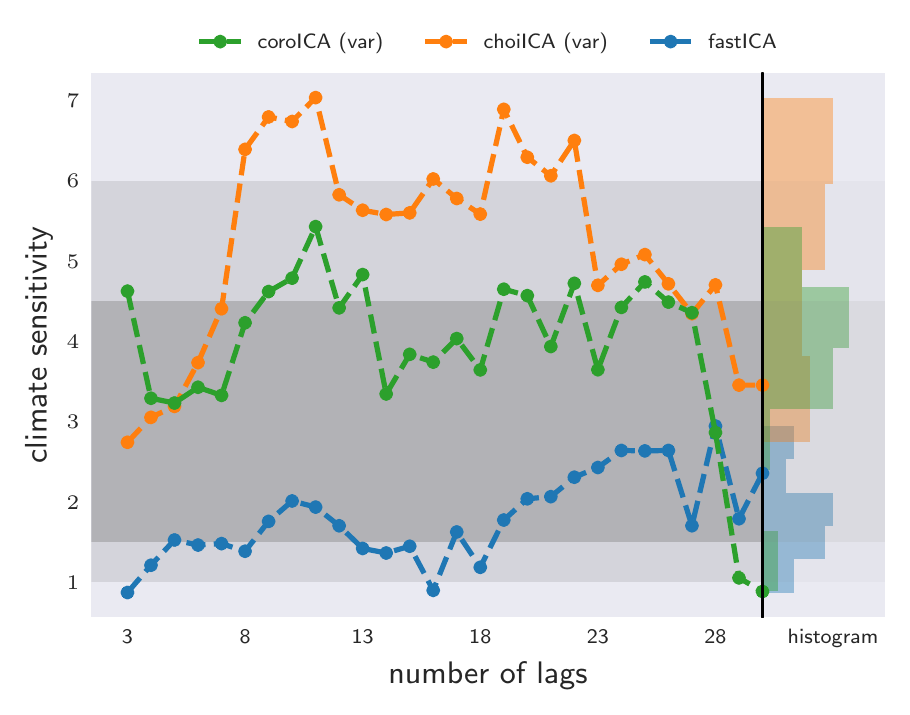}}
  \end{minipage}%
  \begin{minipage}{0.375\textwidth}
    \includegraphics{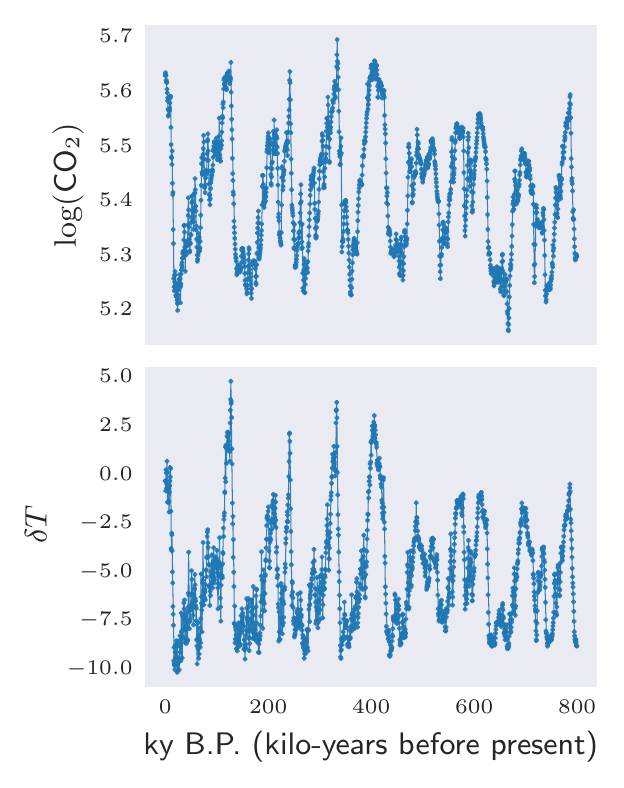}
  \end{minipage}
  \caption{(left) Estimated equilibrium climate sensitivity (ECS) for
    different ICAs depending on the number of lags included into the
    SVAR model.  The light gray and dark gray overlay indicate likely
    and very likely value ranges, respectively, for the true value of
    climate sensitivity as per the fifth assessment report of the
    United Nations Intergovernmental Panel on Climate Change (cf.\
    Section~\ref{ex:climate}).  The differences across procedures
    illustrate that the choice of ICA has a large effect on the
    estimation. (right) Interpolated time-series data, which we model
    with an SVAR model.}
  \label{fig:climate_results}
\end{figure}
We believe the results illustrate two important aspects. Firstly, the
choice of the lags has a strong effect on the estimation of the causal
effect parameters, particularly for boundary cases.  If it is chosen
too small the remaining time-dependence in the data can obscure the
signal. If it is chosen too big part of the signal starts being
removed. Choosing an appropriate number of lags is therefore
crucial. One option would be to apply an information criterion (AIC or
BIC) for this. Secondly, the results illustrate that the choice of ICA
has a large impact on the estimated causal effect parameters. More
specifically, both the assumed signal as well as the assumed
confounding have an impact on the estimation.  Compare the results
between fastICA (non-Gaussian signal) and choiICA/\coroICA (variance
signal) for the former and observe the differences between
fastICA/choiICA (no confounding) and \coroICA (adjusted for stationary
confounding) for the latter.  The choice of the ICA algorithm should
therefore be driven by the assumptions (both on signal type and
confounding) one is willing to employ on the underlying model.
Considering a variance signal and adjusting for confounding, \coroICA
appears to lead to estimates of equilibrium climate sensitivity that
are more closely in line with the highly likely bands previously
identified by the United Nations Intergovernmental Panel on Climate
Change.  This observation is only indicative as all three methods
yield highly variable results and also the panel's highly likely band
rests on certain assumptions that may become refuted at some later
point.  \coroICA can be considered a conservative choice if no
assumptions on confounding can be made, while noise-free methods may
outperform if indeed there were no confounding factors.

\section{Experiments}

In this section, we analyze empirical properties of \coroICA.  To this
end, we first illustrate the performance of \coroICA as compared to
time-concatenated versions of (noisy) ICA variants on simulated
data with and without confounding. We also compare on real data and
outline potential benefits of using our method when analyzing
multi-subject EEG data.

\subsection{Competing Methods}\label{sec:competing_methods}

In all of our numerical experiments, we apply \coroICA as outlined in
Algorithm~\ref{alg:coroICA}, where we partition each group based on
equally spaced grids and run a fixed number of $10\cdot 10^3$ iterations
of the uwedge approximate joint diagonalizer. Unless specified otherwise, \coroICA
refers to \coroICAvar (i.e., the variance signal based version)
and we explicitly write \coroICAvar, \coroICATD and \coroICAvarTD whenever appropriate to avoid confusion. We
compare with all of the methods in
Table~\ref{table:ica_methods}. Since no Python implementation was publicly available,
we implemented the choiICAs and SOBI methods ourselves also
based on a fixed number of $10\cdot 10^3$ iterations of the uwedge
approximate joint diagonalizer. For fastICA we use the implementation
from the scikit-learn Python library due to \citet{scikit-learn} and
use the default parameters.

For the simulation experiments in Section~\ref{sec:simulations},
we also compare to random projections of the
sources, where the unmixing matrix is simply sampled with iid standard
normal entries. The idea of this comparison is to give a baseline of
the unmixing problem and enhance intuition about the scores' behavior. In
order to illustrate the variance in this method, we generally sample
$100$ random projections and show the results for each of them.
A random mixing does not lead to interpretable sources, thus
we do not compare with random projections in the EEG
experiments in Section~\ref{sec:eeg_exps}.

\subsection{Simulations}\label{sec:simulations}

In this section, we investigate empirical properties of \coroICA in
well-controlled simulated scenarios.  First off, we show that we can
recover the correct mixing matrix given that the data is generated
according to our model \eqref{eq:mixing} and
Assumptions~\ref{assumption:group_structure} and~\ref{assumption:var}
hold, while the other ICAs necessarily fall short in this setting
(cf.\ Section~\ref{sec:sim_confounding}). Moreover, in
Section~\ref{sec:sim_robust} we show that even in the absence of any
confounding (i.e., when the data follows the ordinary ICA model and
$H\equiv 0$ in our model) we remain competitive with all competing
ICAs. Finally, in Section~\ref{sec:GARCH} we analyze the performance
of \coroICA for various types of signals and noise settings. Our first
two simulation experiments are based on block-wise shifting variance
signals, which we describe in \hyperlink{dat:datasimul}{Data~Set~1}
and our third simulation experiment is based on GARCH type models
described in \hyperlink{dat:datasimul2}{Data~Set~2}.

\subsubsection{Dependence on Confounding Strength}\label{sec:sim_confounding}

For this simulation experiment, we sample data according to
\hyperlink{dat:datasimul}{Data~Set~1} and choose to simulate $n=10^5$
(dimension $d=22$) samples from $m=10$ groups where each group
contains $n/m=10^4$ observations. Within each group, we select a
random partition consisting of $\abs{\mathcal{P}_g}=10$ subsets while
ensuring that these have the same size on average. We fix the signal
strength to $c_1=1$ and consider the behavior of \coroICA (trained on
half of the groups with an equally spaced grid of $10$ partitions per
group) for different confounding strengths
$c_1=\{0.125,0.25,0.5,1,1.5,2,2.5,3\}$. The results for $1000$
repetitions are shown in Figure~\ref{fig:sim_01}. To allow for a fair
comparison we take the same partition size for choiICA\,(var).

\begin{mdframed}[roundcorner=5pt, frametitle={\hypertarget{dat:datasimul}{Data Set 1}: Block-wise shifting variance signals}]
  For our simulations we select $m$ equally sized groups
  $\groups\coloneqq\{g_1,\dots, g_m\}$ of the data points
  $\{1,\dots, n\}$ and for each group $g\in\groups$ construct a
  partition $\mathcal{P}_g$. Then, we sample a model of the form
  \begin{equation*}
    X_i=A\cdot\left(S_{i}+C\cdot H_{i}\right),
  \end{equation*}
  where the values on the right-hand side are sampled as follows:
  \begin{itemize}
  \item $A, C\in\R^{d\times d}$ are sampled with
    iid entries from $\mathcal{N}(0, 1)$ and
    $\mathcal{N}(0, \frac{1}{d})$, respectively.
  \item For each $g\in\groups$ the variables $H_i\in\R^d$ are sampled
    from $\mathcal{N}(0, \sigma^2_g\identity_{d})$, where the
    $\sigma^2_g$ are sampled iid from $\operatorname{Unif}(0.1, b_1)$.
  \item For each $g\in\groups$ and $e\in\mathcal{P}_g$ the variables
    $S_i\in\R^d$ are sampled from $\mathcal{N}(0,
    \eta^2_e\identity_{d})$, where the $\eta^2_e$ are sampled iid from $\operatorname{Unif}(0.1, b_2)$.
  \end{itemize}
  The parameters $b_1$ and $b_2$ are selected in such a way that the
  expected confounding strength $c_1=\E(\sigma^2_g)$ and variance signal
  strength $c_2\coloneqq\E(\abs{\eta^2_e-\eta^2_f})$ are as dictated
  by the respective experiment. Due to the uniform distribution this
  reduces to
  \begin{equation*}
    b_1=2c_1-0.1\quad\text{and}\quad b_2=3c_2+0.1.
  \end{equation*}
\end{mdframed}
The results indicate that in terms of the MD index the competitors all
become worse as the confounding strength increases. All competing ICAs
systematically estimate an incorrect unmixing matrix. \coroICA on the
other hand only shows a very small loss in precision as confounding
increases; the small loss is expected due to the decreasing signal to
noise ratio. In terms of MCIS, the behavior is analogous but slightly
less well resolved; with increasing confounding strength the unmixing
estimation of all competing ICAs is systematically biased resulting in
bad separation of sources and high MCIS scores both out-of-sample and in-sample.

\begin{figure}[h]
  \includegraphics{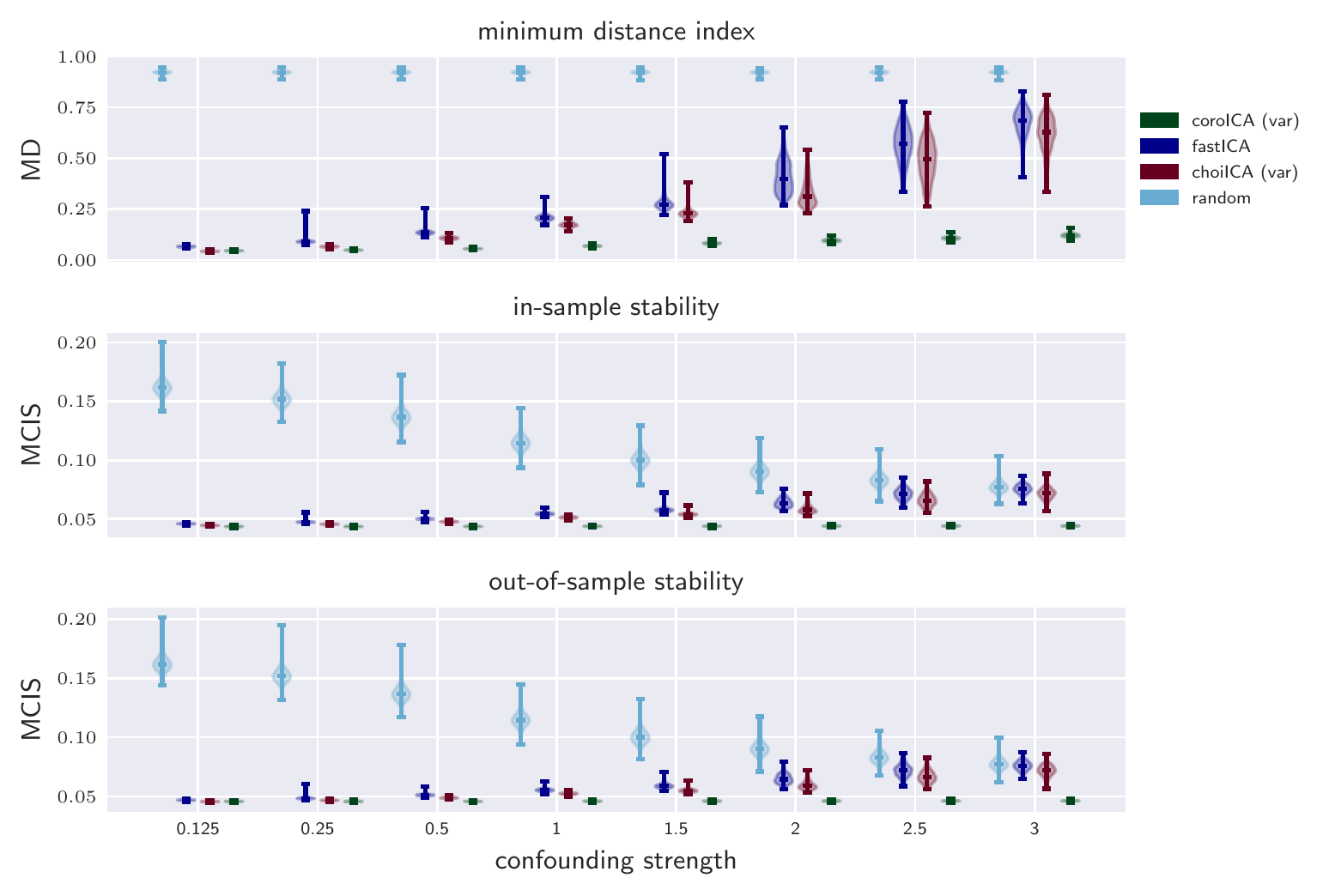}
  \caption{Results of the simulation experiment described in
    Section~\ref{sec:sim_confounding}. Plot shows performance measures
    (MD: small implies close to truth; MCIS: small implies stable) for
    fixed signal strength and various confounding strengths. The
    difference between the competing ICAs and \coroICA is more
    prominent for higher confounding strengths where the estimates of
    the competing ICAs are increasingly different from the true
    unmixing matrix and the sources become increasingly
    unstable.}\label{fig:sim_01}
\end{figure}

\subsubsection{Efficiency in Absence of Group Confounding}\label{sec:sim_robust}

For this simulation experiment, we sample data according to
\hyperlink{dat:datasimul}{Data~Set~1} and choose to simulate $n=2\cdot 10^4$
(dimension $d=22$) samples from $m=10$ groups where each group
contains $n/m=2\cdot 10^3$ observations. Within each group, we then
select a random partition consisting of
$\abs{\mathcal{P}_g}=10$ subsets while ensuring that these have the
same size on average. This time, to illustrate performance in the
absence of confounding, we fix the confounding strengths $c_1=0$ and
consider the behavior of \coroICA (applied to half of the groups with
an equally spaced grid of $10$ partitions per group) for different
signal strengths
$c_2=\{0.025, 0.05, 0.1, 0.2, 0.4, 0.8, 1.6, 3.2, 6.4\}$. The results
for $1000$ repetitions are shown in Figure~\ref{fig:sim_02}. Again,
choiICA\,(var) is applied with the same partition size.

The results indicate that overall \coroICA performs competitive in the
confounding-free case. In particular, there is no drastic negative hit
on the performance of \coroICA as compared to choiICA\,(var) in settings
where the data follows the ordinary ICA model. The slight advantage
compared to fastICA in this setting is due to the signal type which
favors ICA methods that focus on variance signals.

\begin{figure}
  \includegraphics{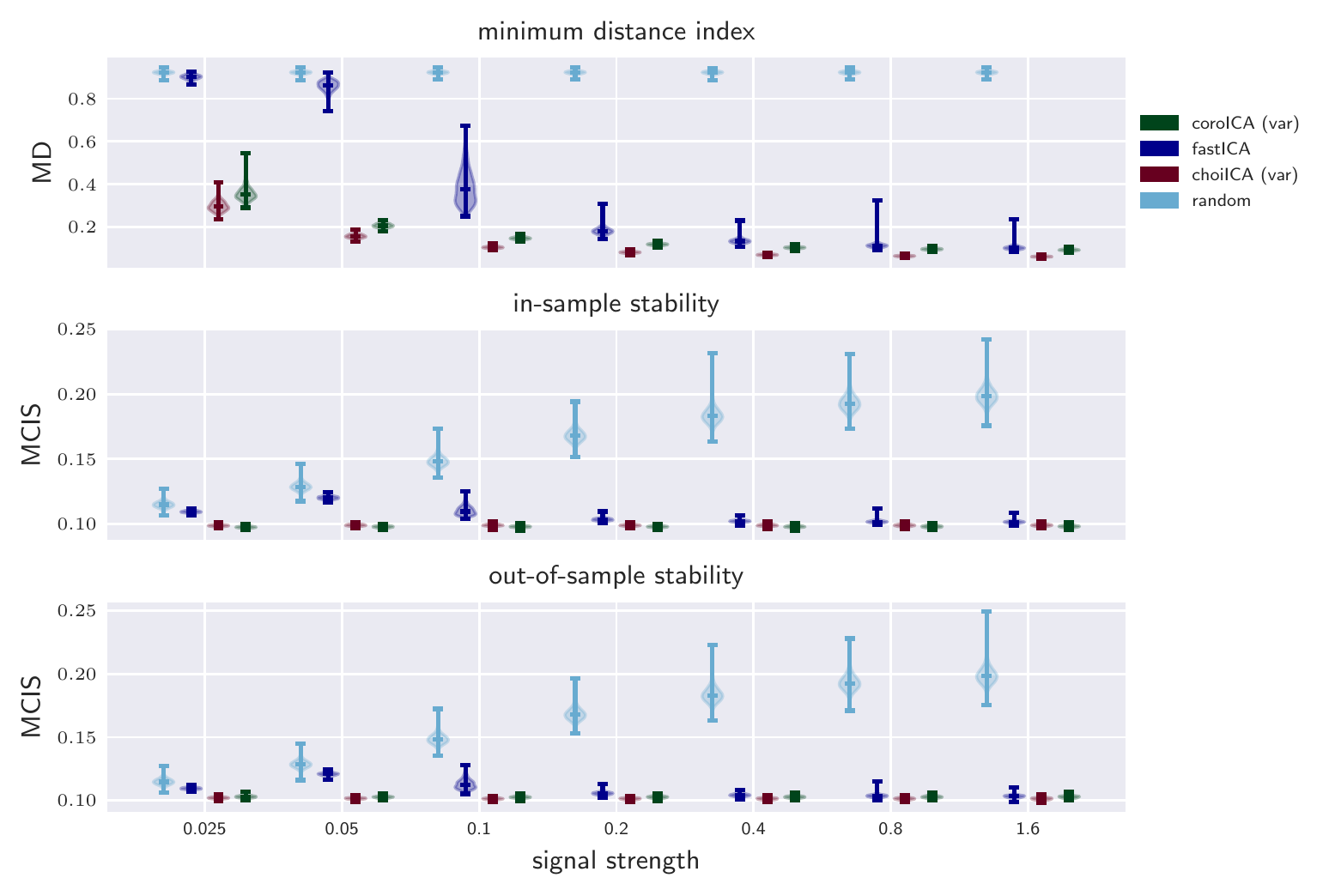}
  \caption{Results of the simulation experiment described in
    Section~\ref{sec:sim_robust}. Plot shows performance measures (MD:
    small implies close to truth; MCIS: small implies stability) for
    data generated without confounding and for various signal
    strengths. These results are reassuring, as they indicate that
    when applied to data that follows the ordinary ICA model, \coroICA
    still performs competitive to competing ICAs even though it allows
    for a richer model class.}\label{fig:sim_02}
\end{figure}

\subsubsection{Comparison with Other Noisy ICA
  Procedures}\label{sec:GARCH}

To get a better understanding of how our proposed ICA performs for
different signal and noise types, we compare it on simulated data as described
in \hyperlink{dat:datasimul2}{Data~Set~2}. We illustrate the different
behavior with respect to the different types of signal by applying all
three of our proposed \coroICA procedures (\coroICAvar,
\coroICATD and \coroICAvarTD) and compare them to the
corresponding choiICA variants which do not adjust for confounding
(choiICA\,(var), choiICA\,(TD) and choiICA\,(var\,\&\,TD)). While all
\coroICA procedures can deal with any type of stationary noise,
choiICA\,(TD) only works for time-independent noise and choiICA\,(var)
and choiICA\,(var\,\&\,TD) cannot handle any type of noise at all (see
Table~\ref{table:ica_methods}). Additionally, we also compare with
fastICA to assess its performance in the various noise settings. The
results are depicted in Figure~\ref{fig:GARCH}.

\begin{mdframed}[roundcorner=5pt,
  frametitle={\hypertarget{dat:datasimul2}{Data Set 2}: GARCH simulation}]
  For this simulation we consider different settings of the
  confounded mixing model
  \begin{equation*}
    X_t=A S_t + H_t.
  \end{equation*}
  More precisely, we consider the following three different
  GARCH type signals: (i)~changing variance, (ii)~changing
  time-dependence, and (iii)~both changing variance and changing
  time-dependence. For each of these signal types we consider two
  types of confounding (noise) terms: (a) time-independent and (b)
  time-dependent auto-regressive noise. For both we construct $d$ independent processes
  $\tilde{H}^1,\dots,\tilde{H}^d$ and then combine them with a
  random mixing matrix $C$ as follows
  \begin{equation*}
    H_t=C\cdot\tilde{H}_t.
  \end{equation*}
  Full details are given in Supplement~\ref{sec:appendix_simu}.
\end{mdframed}
\begin{figure}[h]
  \centering
  \includegraphics{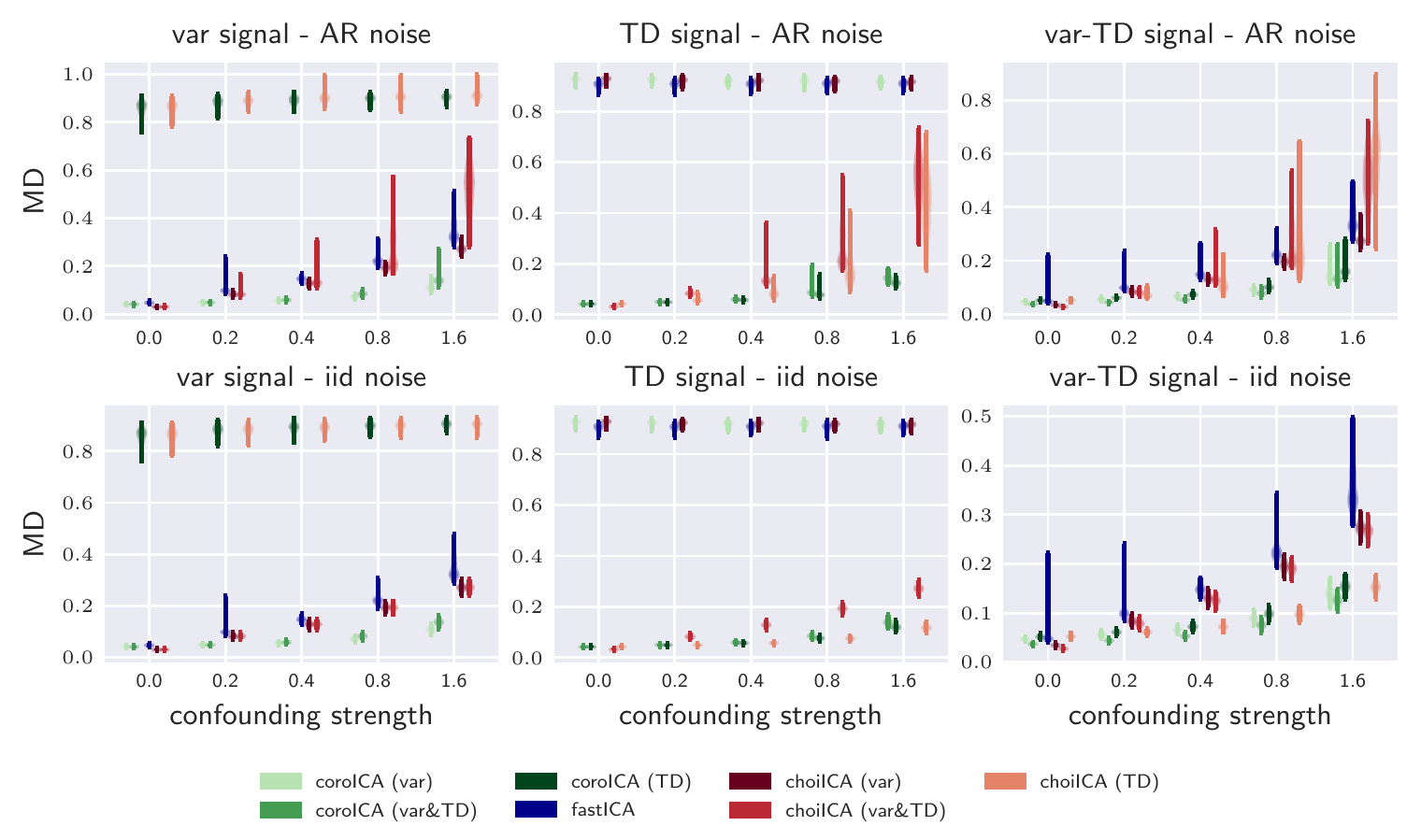}
  \caption{Results of the simulation experiment described in
    Section~\ref{sec:GARCH} and
    \protect\hyperlink{dat:datasimul2}{Data~Set~2}.  Plots show performance
    (MD: small implies close to truth) for data generated with
    auto-regressive (AR) or iid noise and for var, TD, and var \& TD
    signal as described in \protect\hyperlink{dat:datasimul2}{Data~Set~2}.
    \coroICAvarTD is able to estimate the correct mixing in all
    of the considered settings, while others break whenever the more
    restrictive signal/noise assumptions are not met.}
  \label{fig:GARCH}
\end{figure}
In all settings the most general method \coroICAvarTD
is able to estimate the correct mixing.
The two signal specific methods \coroICATD and
\coroICAvar are also able to accurately estimate the mixing in
settings where a corresponding signal exists. It is also worth noting
that they slightly outperform \coroICAvarTD in these
settings. In contrast, when comparing with the choiICA variants, \coroICA is in
general able to outperform the corresponding method. Only in the
setting of a changing time-dependence with time-independent noise,
choiICA\,(TD) is able to slightly outperform \coroICATD.

\subsubsection{Summary of the Performance of \coroICA}

In summary, \coroICA performs well on a larger model class consisting
of both the group-wise confounded as well as the confounding-free
case. An advantage over all competing ICAs is gained
in confounded settings (as shown in Section~\ref{sec:sim_confounding})
while there is at most a small disadvantage in the unconfounded case
(cf. Section~\ref{sec:sim_robust}).  This suggests that whenever the
data is expected to contain at least small amounts of stationary noise
or confounding, one may be better off using \coroICA as the richer
model class will guard against wrong results.  The results in
Section~\ref{sec:GARCH} further underline the robustness of our
proposed method to various types of noise (and signals) for which
other methods break.  Again, even in settings that satisfy the assumptions of
the more tailored methods \coroICA remains competitive.

\subsection{EEG Experiments}\label{sec:eeg_exps}

ICA is often applied in the analysis of EEG data.
Here, we illustrate the potential benefit and use of \coroICA for this.
Specifically, we consider a multi-subject EEG experiment as
depicted in Figure~\ref{fig:eegscenario}. The goal is to find a single
mixing matrix that separates the sources simultaneously on all
subjects. Our proposed model allows that the EEG recordings
for each subject have a different but stationary noise term $H$.
\begin{figure}[h]
\begin{tikzpicture}[scale=0.95]
    \node (A) at(4.1,6) {\textbf{subject a}};
    \node (Aform) at(4.1,5.5) {$X_a = AS_a + H_a$};
    \node (Abrain) at(4.2,-.2){{\reflectbox{\includegraphics[keepaspectratio,width=4.5cm]{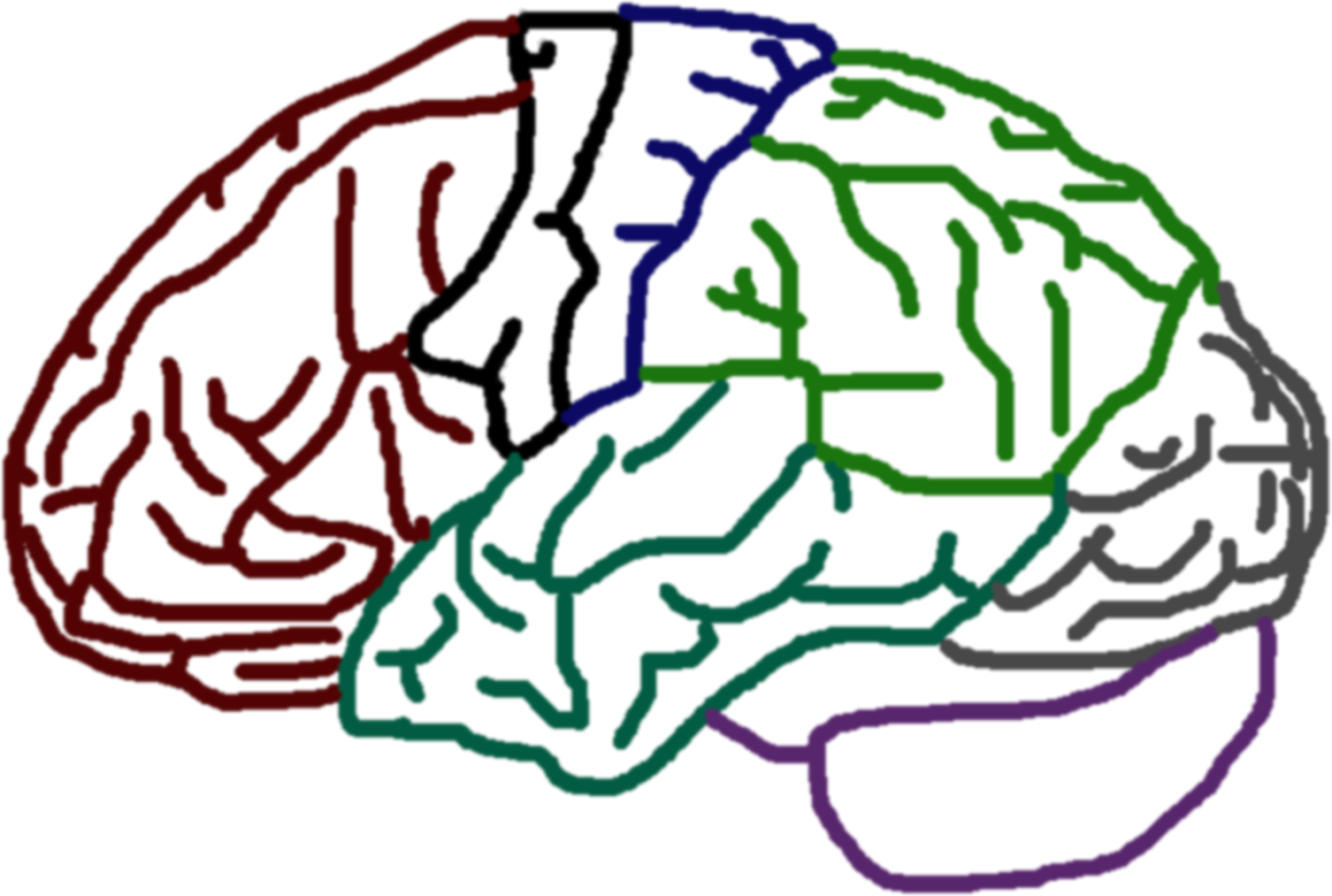}}}};

    \node[nodec] (As1) at(2.6,-.7) {$S^1_a$};
    \node[nodec] (As2) at(4.1,.2) {$S^2_a$};
    \node[nodec] (As3) at(5.9,-.4) {$S^3_a$};

    \node[node] (Ax1) at(2.6,2.5) {$X^1_a$};
    \node[node] (Ax2) at(4.25,2.5) {$X^2_a$};
    \node[node] (Ax3) at(5.9,2.5) {$X^3_a$};

    \node[nodeh] (Ah1) at(2.6,4.5) {$H^1_a$};
    \node[nodeh] (Ah2) at(4.25,4.5) {$H^2_a$};
    \node[nodeh] (Ah3) at(5.9,4.5) {$H^3_a$};

    \draw[->] (Ah1) -- (Ax1);
    \draw[->] (Ah1) -- (Ax2);
    \draw[->] (Ah2) -- (Ax2);
    \draw[->] (Ah2) -- (Ax3);

    \draw[waves,segment angle=6]   (As1) -- (Ax1);
    \draw[waves,segment angle=4.5] (As1) -- (Ax2);
    \draw[waves,segment angle=3]   (As1) -- (Ax3);
    \draw[waves,segment angle=4.5] (As2) -- (Ax1);
    \draw[waves,segment angle=6]   (As2) -- (Ax2);
    \draw[waves,segment angle=4.5] (As2) -- (Ax3);
    \draw[waves,segment angle=3]   (As3) -- (Ax1);
    \draw[waves,segment angle=4.5] (As3) -- (Ax2);
    \draw[waves,segment angle=6]   (As3) -- (Ax3);

    \node (B) at(10,6) {\textbf{subject b}};
    \node (Bform) at(10,5.5) {$X_b = AS_b + H_b$};
    \node (Bbrain) at(10.1,-.2){{\reflectbox{\includegraphics[keepaspectratio,width=4.5cm]{brain}}}};

    \node[nodec] (Bs1) at(8.5,-.7) {$S^1_b$};
    \node[nodec] (Bs2) at(10,.2) {$S^2_b$};
    \node[nodec] (Bs3) at(11.8,-.4) {$S^3_b$};

    \node[node] (Bx1) at(8.5,2.5) {$X^1_b$};
    \node[node] (Bx2) at(10.15,2.5) {$X^2_b$};
    \node[node] (Bx3) at(11.8,2.5) {$X^3_b$};

    \node[nodeh] (Bh1) at(8.5,4.5) {$H^1_b$};
    \node[nodeh] (Bh2) at(10.15,4.5) {$H^2_b$};
    \node[nodeh] (Bh3) at(11.8,4.5) {$H^3_b$};

    \draw[->] (Bh1) -- (Bx1);
    \draw[->] (Bh2) -- (Bx1);
    \draw[->] (Bh2) -- (Bx2);
    \draw[->] (Bh2) -- (Bx3);
    \draw[->] (Bh3) -- (Bx2);
    \draw[->] (Bh3) -- (Bx3);

    \draw[waves,segment angle=6]   (Bs1) -- (Bx1);
    \draw[waves,segment angle=4.5] (Bs1) -- (Bx2);
    \draw[waves,segment angle=3]   (Bs1) -- (Bx3);
    \draw[waves,segment angle=4.5] (Bs2) -- (Bx1);
    \draw[waves,segment angle=6]   (Bs2) -- (Bx2);
    \draw[waves,segment angle=4.5] (Bs2) -- (Bx3);
    \draw[waves,segment angle=3]   (Bs3) -- (Bx1);
    \draw[waves,segment angle=4.5] (Bs3) -- (Bx2);
    \draw[waves,segment angle=6]   (Bs3) -- (Bx3);

    \node (Morebrain) at(15.1,-.2) {{\Huge $\cdots$}};

    \node (coroICA) at(15.1,1.3) {{$\coroICA(X_a, X_b, ...) \approx A$}};
\end{tikzpicture}
\caption{Illustration of a multi-subject EEG recording. For each subject, EEG
  signals $X$ are recorded which are assumed to be corrupted by
  subject-specific (but stationary) noise terms $H$. The goal is to
  recover a single mixing matrix $A$ that separates signals well across all
  subjects.}\label{fig:eegscenario}
\end{figure}
We illustrate the applicability of our method to this setting based on
two publicly available EEG data sets.
\begin{mdframed}[roundcorner=5pt,
  frametitle={\hypertarget{dat:covertattention}{Data Set 3}: CovertAttention data}]
  This data set is due to \citet{treder2011} and consists of EEG
  recordings of 8 subjects performing multiple trials of covertly
  shifting visual attention to one out of 6 cued directions.  The data
  set contains recordings of
  \begin{itemize}
  \item 8 subjects,
  \item for each subject there exist 6 runs with 100 trials,
  \item each recording consists of 60 EEG channels recorded at 1000 Hz
    sampling frequency, while we work with the publicly available data
    that is downsampled to 200 Hz.
  \end{itemize}
  Since visual inspection of the data revealed data segments with huge
  artifacts and details about how the publicly available data was
  preprocessed was unavailable to us, we removed outliers and
  high-pass filtered the data at 0.5 Hz.  In particular, along each
  dimension we set those values to the median along its dimension that
  deviate more than 10 times the median absolute distance from this
  median. We further preprocess the data by re-referencing to common
  average reference (car) and projecting onto the orthogonal
  complement of the null component. For our unmixing estimations, we
  use the entire data, i.e., including intertrial breaks.

  For classification experiments (cf.\
  Section~\ref{sec:EEG.classification}) we use, in line with
  \citet{treder2011}, the 8--12 Hz bandpass-filtered data during the
  500--2000 ms window of each trial, and use the log-variance as
  bandpower feature~\citep{lotte2018review}.
  The classification analysis is restricted to valid trials
  (approximately 311 per subject) with the
  desired target latency as described in \citet{treder2011}.
\end{mdframed}
Results on the CovertAttention
\hyperlink{dat:covertattention}{Data~Set~3} are presented here, while
the results of the analogous experiments on the BCICompIV2a
\hyperlink{dat:bcicomp}{Data~Set~4} are deferred to
Supplement~\ref{sec:EEG_results}.  For both data sets, we compare the
recovered sources of \coroICA with those recovered by competing ICA
methods.  Since ground truth is unknown we report comparisons based on
the following three criteria:

\begin{description}
\item[stability and independence]\ \\
We use MCIS (cf.\ Section~\ref{sec:scores}) to assess the stability and
independence of the recovered sources both in- and out-of-sample.

\item[classification accuracy]\ \\
  For both data sets there is label
  information available that associates certain time windows of the
  EEG recordings with the task the subjects were performing at that
  time. Based on the recovered sources, we build a
  classification pipeline relying on feature extraction and
  classification techniques that are common in the
  field~\citep{lotte2018review}. The achieved classification accuracy
  serves as a proxy of how informative and suitable the extracted signals are.

\item[topographies]\ \\
  For a qualitative assessment, we inspect the topographic maps of the
  extracted sources, as well as the corresponding power spectra and a
  raw time-series chunk. This is used to illustrate that the sources
  recovered by \coroICA do not appear random or implausible for EEG
  recordings and are qualitatively similar to what is expected from
  other ICAs.  Furthermore, we provide an overview over all components
  achieved on \hyperlink{dat:covertattention}{Data~Set~3} by SOBI,
  fastICA, and \coroICA in the Supplementary
  Section~\ref{sec:alltopos}, where components are well resolved when
  the corresponding topographic map and activation map are close to
  each other (cf.\ Section~\ref{sec:scores}).
\end{description}

\subsubsection{Stability and Independence}\label{sec:stability}

We aim to probe stability not only in-sample but also verify the
expected increase in stability when applying the unmixing matrix to
data of new unseen subjects, i.e., to new groups of samples with
different confounding specific to that subject. In order to assess
stability and independence of the recovered sources in terms of the
MCIS both in- and out-of-sample and for different amounts of training
samples, we proceed by repeatedly splitting the data into a training
and a test data set.  More precisely, we construct all possible splits
into training and test subjects for any given number of training
subjects.  For each pair of training and test set, we fit an unmixing
matrix using \coroICA and all competing methods described in
Section~\ref{sec:competing_methods}. We then compute the MCIS on the
training and test data for each method separately and collect the
results of each training-test split for each number of training
subjects.

Results obtained on the CovertAttention data set (with equally spaced
partitions of $\approx$15 seconds length) are given in
Figure~\ref{fig:instability_covert} and the results for the
BCICompIV2a data set (with equally spaced partitions of $\approx$15
seconds length) are shown in Supplement~\ref{sec:appendix_stability},
Figure~\ref{fig:instability_bci}.  For both data sets the results are
qualitatively similar and support the claim that the unmixing obtained
by \coroICA is more stable when transferred to new unseen subjects.
While for the competing ICAs the instability on held-out subjects does
not follow a clear decreasing trend with increasing number of training
subjects, \coroICA can successfully make use of additional training
subjects to learn a more stable unmixing matrix.

\begin{figure}[h]
\centering
\includegraphics{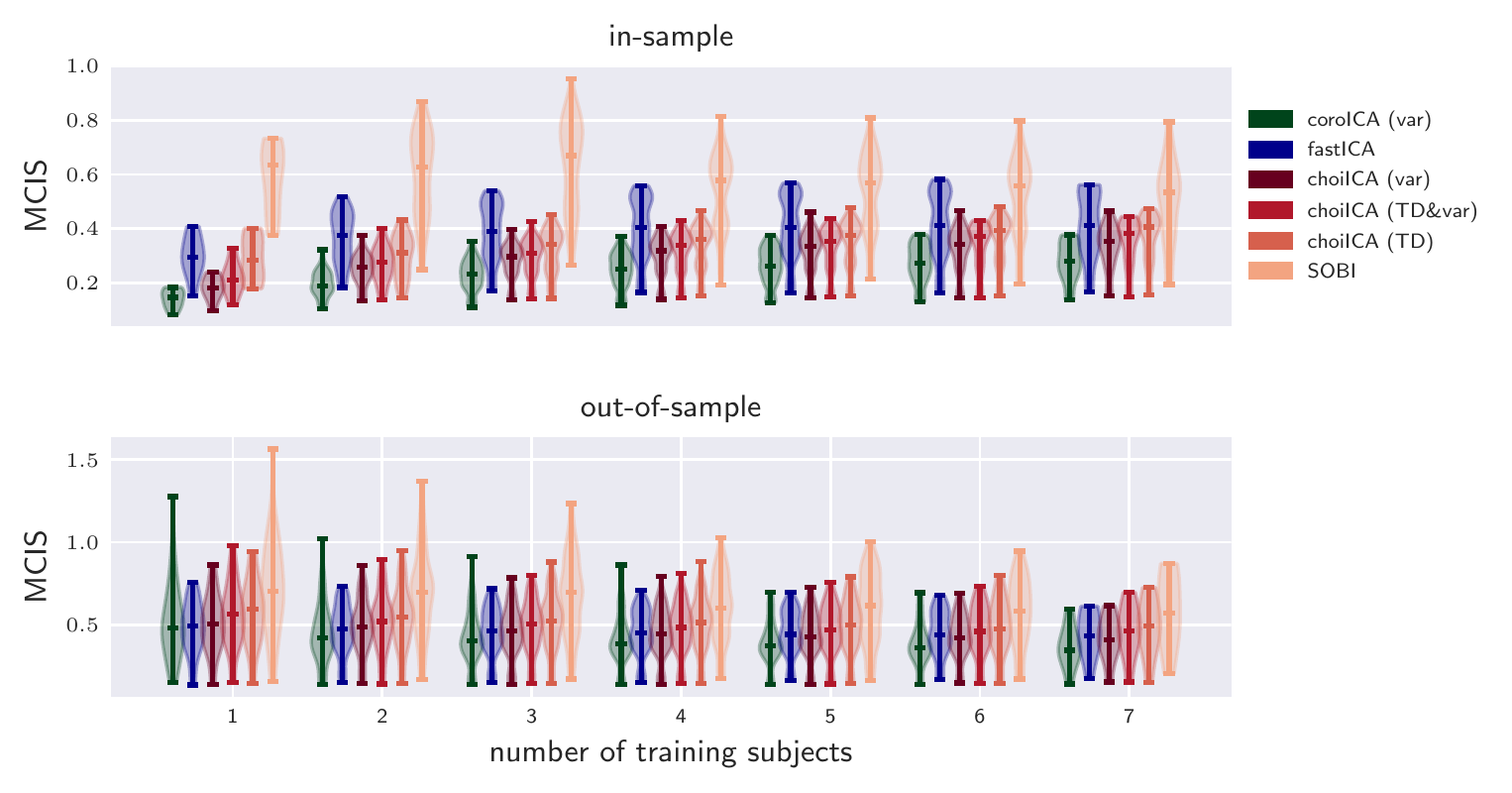}
\caption{Experimental results for comparing the stability of sources
  (MCIS: small implies stable) trained on different numbers of
  training subjects (cf.\ Section~\ref{sec:stability}), here on the
  CovertAttention \protect\hyperlink{dat:covertattention}{Data~Set~3},
  demonstrating that \coroICA, in contrast to the competing ICA
  methods, can successfully incorporate more training subjects to
  learn more stable unmixing matrices when applied to new unseen
  subjects.}
  \label{fig:instability_covert}
\end{figure}

\begin{figure}[h]
\centering
  \includegraphics{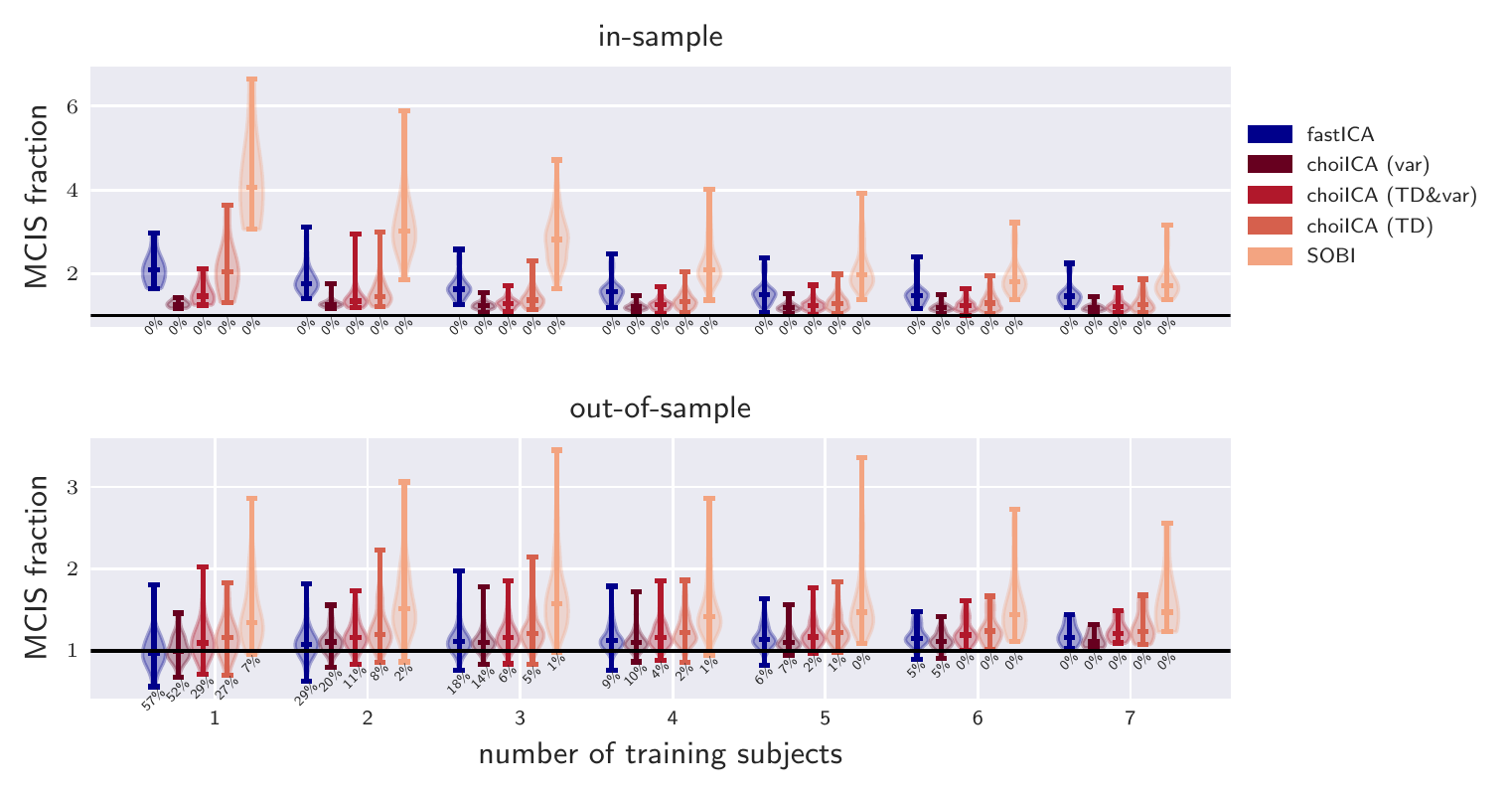}
  \caption{Experimental results for comparing the stability of sources
    of the competing methods relative to the stability obtained by
    \coroICA (MCIS fraction: above $1$ implies less stable than
    \coroICA) trained on different numbers of training subjects (cf.\
    Section~\ref{sec:stability}), here on the CovertAttention
    \protect\hyperlink{dat:covertattention}{Data~Set~3}, demonstrating
    that \coroICA can successfully incorporate more training subjects
    to learn more stable unmixing matrices when applied to new unseen
    subjects.}
  \label{fig:instability_fraction_covert}
\end{figure}

Due to the characteristics and low signal-to-noise ratio in EEG recordings,
the evaluation based on the absolute MCIS score is less well resolved
than what we have seen in the simulations before.  For this reason we
additionally provide a more focused evaluation by considering the MCIS
fraction: the fraction of the MCIS achieved on a subject by the respective
competitor method divided by the MCIS achieved on that subject by
\coroICA when trained on the same subjects. Thus, this score compares
MCIS on a per subject basis, where values greater than $1$ indicate
that the respective competing ICA method performed worse than
\coroICA.  Figure~\ref{fig:instability_fraction_covert} shows the
results on the CovertAttention \hyperlink{dat:covertattention}{Data~Set~3}
confirming that \coroICA can successfully incorporate more training
subjects to derive a better unmixing of signals.

\subsubsection{Classification based on Recovered Sources}\label{sec:EEG.classification}

While the results in the previous section indicate that \coroICA can
lead to more stable separations of sources in EEG than the competing
methods, in scenarios with an unknown ground truth the stability of
the recovered sources cannot serve as the sole determining criterion
for assessing the quality of recovered sources.  In addition to asking
whether the recovered sources are stable and independent variance
signals, we hence also need to investigate whether the sources
extracted by \coroICA are in fact reasonable or meaningful. In the
``America's Got Talent Duet Problem'' (cf.\
Example~\ref{ex:toy_example}) this means that each of the recovered
sources should only contain the voice of one (independent) singer
(plus some confounding noise that is not the other singer).  For EEG
data, this assessment is not as easy.  Here, we approach this problem
from two angles: (a) in this section we show that the recovered
sources are informative and suitable for common EEG classification
pipelines, (b) in Section~\ref{sec:topographic_maps} we qualitatively
assess the extracted sources based on their power spectra and
topographic maps.

In both data sets there are labeled trials, i.e., segments of data
during which the subject covertly shifts attention to one of six cues
(cf.\ \hyperlink{dat:covertattention}{Data~Set~3}) or performs one of
four motor imagery tasks (cf.\
\hyperlink{dat:bcicomp}{Data~Set~4}). Based on these, one can try to
predict the trial label given the trial EEG data. To mimic a situation
where the sources are transferred from other subjects, we assess the
informativeness of the extracted sources in a leave-k-subjects-out
fashion as follows. We estimate an unmixing matrix on data from all
but $k$ subjects, compute bandpower features for each extracted signal
and for each trial (as described in
\hyperlink{dat:covertattention}{Data~Set~3}
and~\hyperlink{dat:bcicomp}{Data~Set~4}), and on top of those we train
an ensemble of $200$ bootstrapped shrinkage linear discriminant
analysis classifiers where each is boosted by a random forest
classifier on the wrongly classified trials.  This pipeline (signal
unmixing, bandpower-feature computation, trained ensemble classifier),
is then used to predict the trials on the $k$ held-out subjects.

The results are reported in Figure~\ref{fig:classification_covert} and
Supplement~\ref{sec:appendix_EEG.classification},
Figure~\ref{fig:classification_bci} which show for each number of
training subjects, the accuracies achieved on the respective held-out
subjects when using the unmixing obtained on the remaining subjects by
either \coroICA or one of the competitor methods.  The results on both
data sets support the claim that the sources recovered by \coroICA are
not only stable but in addition also capture meaningful aspects of the
data that enable competitive classification accuracies in
fully-out-of-sample classification. The mean improvement in
classification accuracy of \coroICA over the other methods increases
with increasing number of training subjects. This behavior is expected
since it is difficult to disambiguate signal from subject-specific
confounding for few training subjects, while \coroICA is expected to
learn an unmixing which better adjusts for the confounding with more
training subjects.

\begin{figure}[h]
  \includegraphics{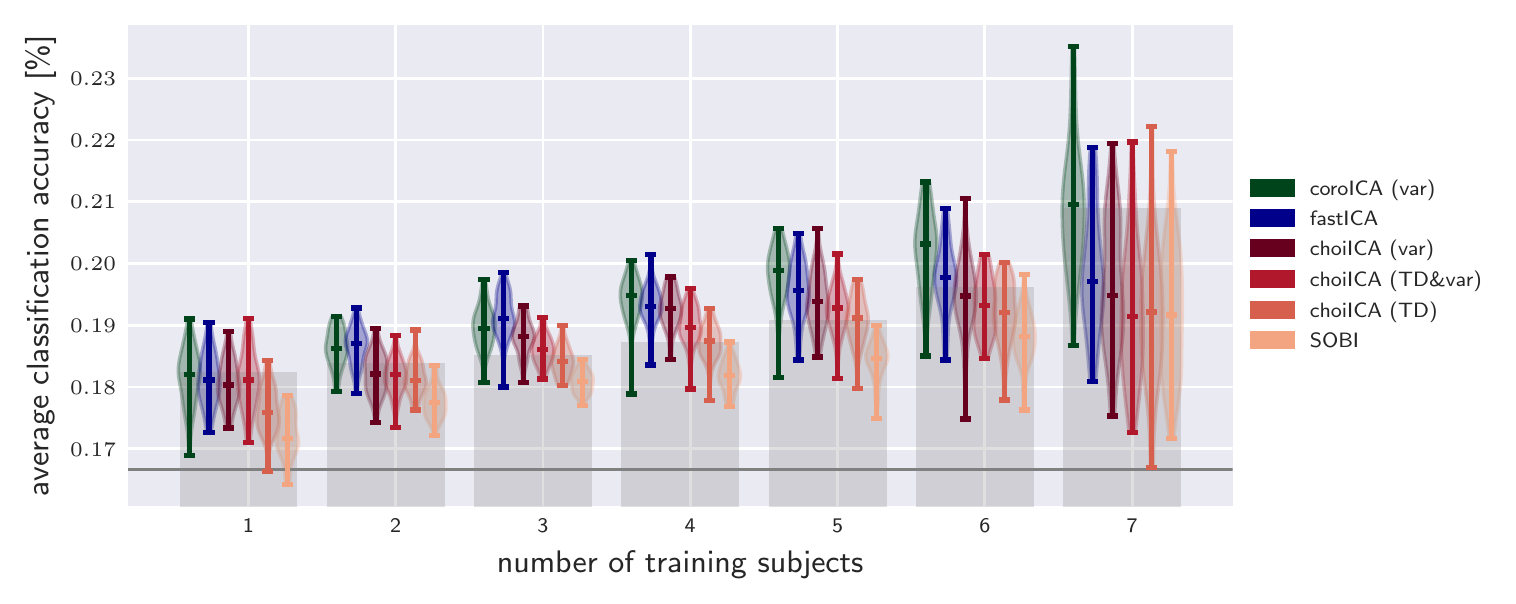}
  \caption{Classification accuracies on held-out subjects (cf.\
    Section~\ref{sec:EEG.classification}), here on the CovertAttention
    \protect\hyperlink{dat:covertattention}{Data~Set~3}. Gray regions indicate
    a 95\% confidence interval of random guessing accuracies.}
  \label{fig:classification_covert}
\end{figure}

It is worth noting that these classification results depend heavily on
the employed classification pipeline subsequent to the source
separation. Here, our goal is only to show that \coroICA does indeed
separate the data into informative sources.  In practice, and when
only classification accuracy matters, one might also consider using a
label-informed source separation \citep{dahne2014spoc}, employ common
spatial patterns \citep{koles1990spatial} or use decoding techniques
based on Riemannian geometry \citep{barachant2012multiclass}.

\subsubsection{Topographic Maps}\label{sec:topographic_maps}

The components that \coroICA extracts from EEG signals are stable
(cf.\ Section~\ref{sec:stability}) and meaningful in the sense that
they contain information that enables classification of trial labels,
which is a common task in EEG studies (cf.\
Section~\ref{sec:EEG.classification}). In this section, we complement
the assessment of the recovered sources by demonstrating that the
results obtained by \coroICA lead to topographies, activation maps,
power spectra and raw time-series that are similar to what is commonly
obtained during routine ICA analyses of EEG data when the plausibility
and nature of ICA components is to be judged.

Topographies are common in the EEG literature to depict the relative
projection strength of extracted sources to the scalp sensors. More
precisely, the column-vector $a_j$ of $A = V^{-1}$ that specifies the
mixing of the $j$-th source component
is visualized as follows.  A sketched top view of the head is
overlayed with a heatmap where the value at each electrodes' position
is given by the corresponding entry in $a_j$. These topographies are
indicative of the nature of the extracted sources, for example the
dipolarity of source topographies is a criterion invoked to identify
cortical sources~\citep{delorme2012} or the topographies reveal that
the source mainly picks up changes in the electromagnetic field
induced by eye movements.  Another way to visualize an extracted
source is an activation map, which is commonly obtained by depicting the vector
$\empcov(X)v_j^\top$ (where $v_j$ is $j$-th row of unmixing matrix
$V$) and shows for each electrode how the signal observed at that
electrode covaries with the signal extracted by
$v_j$~\citep{haufe2014interpretation}.  Besides inspecting the raw
time-series data, another criterion invoked to separate cortical from
muscular components is the log power spectrum.  For example, a
monotonic increase in spectral power starting at around 20 Hz is
understood to indicate muscular
activity~\citep{goncharova2003emg} and peaks in typical EEG frequency
ranges are used to identify brain-related components.\footnote{These
  are commonly employed criteria which are also advised in
  the eeglab tutorial
  \citep[\url{https://sccn.ucsd.edu/wiki/Chapter_09:_Decomposing_Data_Using_ICA}]{delorme2004eeglab}
  and the neurophysiological biomarker toolbox wiki
  \citep[\url{https://www.nbtwiki.net/doku.php?id=tutorial:how_to_use_ica_to_remove_artifacts}]{hardstone2012detrended}.}.

In Figure~\ref{fig:topomap}, we depict the aforementioned criteria for
three exemplary components extracted by \coroICA on the
CovertAttention \hyperlink{dat:covertattention}{Data~Set~3}.
\begin{figure}
  \includegraphics{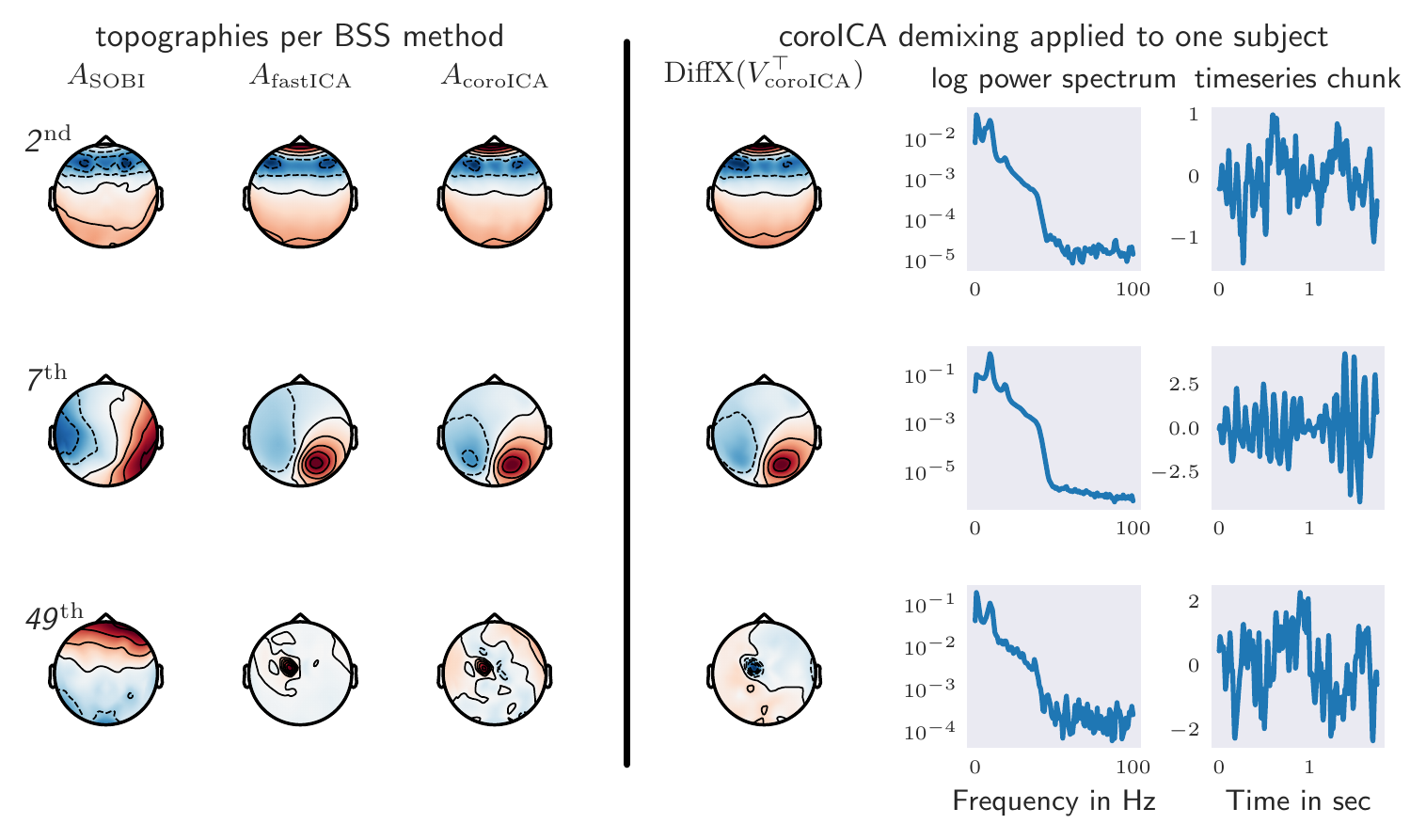}
  \caption{Visualization of exemplary EEG components recovered on the
    CovertAttention
    \protect\hyperlink{dat:covertattention}{Data~Set~3}. On the left
    the topographies of three components are shown where the mixing
    matrix is the inverse of the unmixing matrix obtained by SOBI
    ($A_\text{SOBI}$), the unmixing matrix obtained by fastICA
    ($A_\text{fastICA}$) and that of \coroICAvar ($A_\text{coroICA}$). On
    the right we depict, for a randomly chosen subject, the activation
    maps (cf.\ Section~\ref{sec:topographic_maps} and
    \ref{sec:scores}), the log power spectra, and randomly chosen
    chunks of the raw time-series data corresponding to the respective
    \coroICAvar components. Components extracted by \coroICAvar are
    qualitatively similar to those of the commonly employed ICA
    procedures; see Section~\ref{sec:topographic_maps} for details.}
  \label{fig:topomap}
\end{figure}
Following the discussion in Section~\ref{sec:scores} we show the
activation maps as
\[\operatorname{DiffX}(v_j^\top) = \sum_{\mathcal{M}\in\mathcal{M}^*} \operatorname{sign}(v_jMv_j^\top) Mv_j^\top,\]
which captures variance changing signal and allows to asses the quality of
a recovered source by comparison to the topographic map $a_j$ (cf.\ Equation~\ref{eq:activation_map}).
Here, the idea is to demonstrate that
\coroICA components are qualitatively similar to
components extracted by commonly employed SOBI-ICA or fastICA.
Therefore, we
choose to display one example of an ocular component
(2\textsuperscript{nd} where the topography is indicative of eye
movement), a cortical component (7\textsuperscript{th} where the
dipolar topography, the typical frequency peak at around 8--12 Hz, and
the amplitude modulation visible in the raw time-series are indicative
of the cortical nature), and an artifactual component
(51\textsuperscript{st} where the irregular topography and the high
frequency components indicate an artifact). For comparison, we
additionally show for each component the topographies of the
components extracted by SOBI-ICA or fastICA by matching
the recovered source which most strongly correlates with the one
extracted by \coroICA.
The components extracted by \coroICA closely resemble the results
one would obtain from a commonly employed ICA analysis on EEG data.

For completeness, we provide an overview over all components extracted
on \hyperlink{dat:covertattention}{Data~Set~3} by SOBI, fastICA, and
\coroICAvar in the Supplementary Section~\ref{sec:alltopos}.  Components
are well resolved when the corresponding topographic map and
activation map are close to each other (cf.\
Section~\ref{sec:scores}), which, by visual inspection, appears to be
more often the case for \coroICA than for the competing methods.

\section{Conclusion}

In this paper, we propose a method for recovering independent sources
corrupted by group-wise stationary confounding. It extends ordinary
ICA to an easily interpretable model, which we believe is relevant for
many practical problems as is demonstrated in Section~\ref{ex:climate}
for climate data and Section~\ref{sec:eeg_exps} for EEG data. We give
explicit assumptions under which the sources are identifiable in the
population case (cf.\ Section~\ref{sec:identifiable}). Moreover, we
introduce a straightforward algorithm for estimating the sources based
on the well-understood concept of approximate joint matrix
diagonalization. As illustrated in the simulations in
Section~\ref{sec:simulations}, this estimation procedure performs
competitive even for data from an ordinary ICA model, while
additionally being robust and able to adjust for group-wise stationary
confounding. For real data, we show that the \coroICA model indeed
performs reasonably on EEG data and leads to improvements in
comparison to commonly employed approaches, while at the same time
preserving an enhanced interpretation of the recovered sources.

\acks
The authors thank Vinay Jayaram, Nicolai Meinshausen, Jonas Peters,
Gian Thanei, the action editor Kenji Fukumizu, and anonymous reviewers
for helpful discussions and constructive comments. NP and PB were
partially supported by the European Research Commission grant
786461 CausalStats - ERC-2017-ADG.

\newpage
\renewcommand\appendixpagename{Supplementary material}
\renewcommand\appendixtocname{Supplementary material}
\appendix
\appendixpage
The supplementary material consists of the following appendices.
\begin{enumerate}
\item[\textbf{A}]
  \hyperref[sec:proofs]{\textbf{Identifiability Proof}}
\item[\textbf{B}]
  \hyperref[sec:complementary]{\textbf{Complementary Material}}
\item[\textbf{C}]
  \hyperref[sec:EEG_results]{\textbf{EEG Experiments on \hyperlink{dat:bcicomp}{Data~Set~4}}}
\item[\textbf{D}]
  \hyperref[sec:alltopos]{\textbf{All Topographies on \hyperlink{dat:covertattention}{Data~Set~3}}}
\end{enumerate}

\section{Identifiability Proof}\label{sec:proofs}

The proof is based on Theorem~1 from \citet{kleinsteuber2013}. For completeness,
we introduce some of the notation therein and
state their result with adapted notation to ease following our proof of Theorem~\ref{thm:mixing}.
We begin by defining the
empirical correlation between two vectors $\mathbf{v},\mathbf{w}\in\R^{d}$ as
\begin{equation*}
  \empcorr(\mathbf{v}, \mathbf{w})\coloneqq
  \begin{cases}
    \frac{\mathbf{v}^{\top}\mathbf{w}}{\norm{\mathbf{v}}\norm{\mathbf{w}}},
    \quad&\text{if }\mathbf{v}\not = 0 \text{ and } \mathbf{w}\not =
    0,\\
    1,\quad&\text{otherwise.}
  \end{cases}
\end{equation*}
Moreover, for a collection of $(d\times d)$-real diagonal matrices
$\{Z_1,\dots,Z_m\}$, we define the following collinearity measure
\begin{equation}
  \label{eq:rho_def}
  \rho(Z_1,\dots,Z_m)\coloneqq\max_{1\leq k < l\leq d}\abs{\empcorr(\mathbf{z}_k,\mathbf{z}_l)},
\end{equation}
where $\mathbf{z}_{j}\coloneqq(z_{1}(j)\dots,z_{m}(j))$ and
$z_{i}(j)$ is the $j$-th diagonal element of the matrix $Z_i$. Using
this notation we can state the uniqueness result due to \citet[Theorem
1]{kleinsteuber2013} as follows.

\begin{theorem}[{{\citet[Theorem 1]{kleinsteuber2013}}}]
  \label{thm:kleinsteuber}
  Let $D_i\in\R^{d\times d}$, for $i\in\{1,\dots,m\}$ be diagonal, and
  let $M\in\R^{d\times d}$ be an invertible matrix so that
  $M^{\top}D_iM$ is diagonal as well. Then $M$ is essentially, up to scaling and permutation of its columns, unique
  if and only if $\rho(D_1,\dots,D_m)<1$.
\end{theorem}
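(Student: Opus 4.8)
To prove Theorem~\ref{thm:kleinsteuber}, the plan is to reduce the quadratic joint-diagonalization constraints to a single pointwise identity and then handle both implications separately. Write $E_i\coloneqq M^\top D_i M$, which is diagonal by hypothesis, and let $z_i(a)$ and $e_i(j)$ denote the diagonal entries of $D_i$ and $E_i$. Collect them into the profile vectors $\mathbf{z}_a\coloneqq(z_1(a),\dots,z_m(a))$ and $\mathbf{e}_j\coloneqq(e_1(j),\dots,e_m(j))$ in $\R^m$, so that $\rho(D_1,\dots,D_m)=\max_{k<l}\abs{\empcorr(\mathbf{z}_k,\mathbf{z}_l)}$. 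Since the $D_i$ are already diagonal, the identity matrix is itself a diagonalizer; hence ``$M$ essentially unique up to scaling and permutation of columns'' is equivalent to the statement that the set of invertible $N$ with $N^\top D_i N$ diagonal for all $i$ consists of exactly the generalized permutation matrices (a permutation composed with an invertible diagonal). I would show $\rho<1$ is equivalent to this.

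The key step, and the one I expect to be the main obstacle, is to linearize the constraint $M^\top D_i M = E_i$. The naive route of forming the pencil $D_i D_{i_0}^{-1}$ to reduce to conjugation of a diagonal matrix requires some $D_{i_0}$ to be invertible, which is not assumed; circumventing this is the crux. Instead I would rewrite the constraint as $M^\top D_i = E_i M^{-1}$ and read off entries. Writing $w^j_a\coloneqq(M^{-1})_{ja}$, the $(j,a)$ entry yields $m_{aj}\,z_i(a)=e_i(j)\,w^j_a$ for all $i,j,a$, that is, in profile form,
\[ m_{aj}\,\mathbf{z}_a = w^j_a\,\mathbf{e}_j \quad\text{in } \R^m, \text{ for all } a,j. \]
This rank-one coupling between the two families of profiles holds with no invertibility assumption on the $D_i$ and is exactly what makes the remaining arguments short.

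For sufficiency ($\rho<1\Rightarrow$ uniqueness), note first that $\rho<1$ forces every $\mathbf{z}_a\neq 0$ (a vanishing profile would have empirical correlation $1$ with all others) and forces every pair $\mathbf{z}_k,\mathbf{z}_l$ with $k\neq l$ to be non-collinear. Fix a column $j$ of $M$ and suppose it had two nonzero entries $m_{aj},m_{bj}$ with $a\neq b$. By the coupling identity, $\mathbf{z}_a=(w^j_a/m_{aj})\,\mathbf{e}_j$ and $\mathbf{z}_b=(w^j_b/m_{bj})\,\mathbf{e}_j$ are both nonzero multiples of $\mathbf{e}_j$ (here $\mathbf{e}_j\neq 0$, since $\mathbf{e}_j=0$ would force the whole $j$-th column of $M$ to vanish, contradicting invertibility), hence collinear, contradicting $\rho<1$. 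Thus each column of $M$ has at most one nonzero entry, and invertibility upgrades this to exactly one nonzero per column lying in distinct rows, so $M$ is a generalized permutation matrix.

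For necessity (failure of $\rho<1\Rightarrow$ non-uniqueness) I would argue by explicit construction. If $\rho=1$ there is a pair $k\neq l$ with $\mathbf{z}_k,\mathbf{z}_l$ collinear, or a vanishing profile. I would exhibit an invertible $R$ acting as an arbitrary $2\times 2$ block on coordinates $\{k,l\}$ and as the identity elsewhere that is not a generalized permutation yet keeps every $R^\top D_i R$ diagonal: outside the block nothing changes, and the single surviving off-diagonal entry is proportional to $z_i(k)\,r_{11}r_{12}+z_i(l)\,r_{21}r_{22}$, which collinearity of $\mathbf{z}_k,\mathbf{z}_l$ lets me annihilate for all $i$ by imposing one linear relation among the entries of $R$ while keeping a genuinely non-monomial, invertible pattern. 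This produces a second, essentially different diagonalizer, defeating uniqueness and completing the contrapositive. I expect the only care needed here is choosing the nonzero pattern of $R$ so that it stays invertible in the degenerate sub-case of a vanishing profile.
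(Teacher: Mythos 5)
Your proof is correct, but there is nothing in the paper to compare it against: the paper does not prove this statement at all. It is imported verbatim (with adapted notation) as Theorem~2 from \citet[Theorem 1]{kleinsteuber2013}, and the paper's Supplement~A uses it purely as a black box to establish identifiability (Theorem~1). So the comparison is with the cited source rather than with the paper, and what you have produced is a self-contained elementary proof of the imported result. On its own terms it checks out in both directions. The linearized coupling $m_{aj}\,\mathbf{z}_a = w^j_a\,\mathbf{e}_j$, read off entrywise from $M^\top D_i = E_i M^{-1}$, is valid without any invertibility assumption on the $D_i$ and does sidestep the pencil trick $D_iD_{i_0}^{-1}$; this is the right crux to isolate. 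For sufficiency, your use of the convention $\empcorr(\mathbf{v},\mathbf{w})=1$ when a vector vanishes is exactly what makes $\rho<1$ force every $\mathbf{z}_a\neq 0$, which you then correctly use twice: to rule out $\mathbf{e}_j=0$ (which would zero out column $j$ of $M$) and to conclude that two nonzero entries in a column would make $\mathbf{z}_a,\mathbf{z}_b$ nonzero multiples of $\mathbf{e}_j$, hence collinear, contradicting $\rho<1$; invertibility then upgrades ``at most one nonzero per column'' to monomial. For necessity, the block construction works: cross terms between the $\{k,l\}$ block and its complement vanish since $R$ fixes the complementary coordinates, the only surviving off-diagonal entry of $R^\top D_i R$ is $z_i(k)r_{11}r_{12}+z_i(l)r_{21}r_{22}$, and with $\mathbf{z}_l=c\,\mathbf{z}_k$ the choice $r_{11}=r_{22}=1$, $r_{21}=t$, $r_{12}=-ct$ kills it for all $i$, has determinant $1+ct^2\neq 0$ for suitable small $t\neq 0$, and is visibly non-monomial; the degenerate sub-case $\mathbf{z}_k=0$ (forced when a profile vanishes) is handled by the symmetric triangular pattern you indicate. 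One step I would make fully explicit rather than parenthetical: the equivalence between ``$M$ unique up to scaling and permutation of columns'' and ``every joint diagonalizer is monomial'' rests on the facts that the identity is itself a diagonalizer and that monomial matrices form a group of diagonalizers, and this equivalence is used in both directions of your argument.
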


\-\\ Using this result we prove Theorem~\ref{thm:mixing}.\-\\

\begin{proof}
  The theorem is proven by the correct invocation of
  Theorem~\ref{thm:kleinsteuber}. We first define the unmixing matrix
  $V\coloneqq A^{-1}$ and introduce the sets of matrices
  \begin{equation*}
    \mathcal{D}_{\operatorname{var}}\coloneqq\{V(\cov(X_k)-\cov(X_l))V^{\top}\,|\,g\in\groups\text{
    and }
    k,l\in g\}.
  \end{equation*}%
  and%
  \begin{equation*}
    \mathcal{D}_{\operatorname{TD}}\coloneqq\{V(\cov(X_k,X_{k-\tau})-\cov(X_l,X_{l-\tau}))V^{\top}\,|\,g\in\groups\text{
    and }
    k,l\in g\}.
  \end{equation*}
  Due to the assumed ICA model and
  Assumption~\ref{assumption:group_structure}, all matrices in the sets
  $\mathcal{D}_{\operatorname{var}}$ and
  $\mathcal{D}_{\operatorname{TD}}$ are diagonal (cf.\
  \eqref{eq:jointdiag_a} and \eqref{eq:jointdiag_b}). Moreover, for $g\in\groups$ and $k,l\in g$ it
  holds that
  \begin{align*}
    &V(\cov(X_k)-\cov(X_l))V^{\top}\\
    &\quad=\cov(S_k)-\cov(S_l)\\
    &\quad=\operatorname{diag}(\var(S_k^1)-\var(S_l^1),\dots,\var(S_k^d)-\var(S_l^d))
  \end{align*}
  and
  \begin{align*}
    &V(\cov(X_k,X_{k-\tau})-\cov(X_l,X_{l-\tau}))V^{\top}\\
    &\quad=\cov(S_k,S_{k-\tau})-\cov(S_l,S_{l-\tau})\\
    &\quad=\operatorname{diag}(\cov(S_k^1,S_{k-\tau}^1)-\cov(S_l^1,S_{l-\tau}^1),\dots,\cov(S_k^d,S_{k-\tau}^d)-\cov(S_l^d,S_{l-\tau}^d)).
  \end{align*}
  Next, we define for all $j\in\{1,\dots,d\}$ the vectors
  \begin{align*}
    \mathbf{z}_j&=\left(\left(\var(S_k^j)-\var(S_l^j)\right)_{k,l\in
        g}\right)_{g\in\groups}\\
    \text{or}\quad
    \mathbf{z}_j&=\left(\left(\cov(S_k^j,S_{k-\tau}^j)-\cov(S_l^j,S_{l-\tau}^j)\right)_{k,l\in g}\right)_{g\in\groups},
  \end{align*}
  depending on whether a variance signal or time-dependence signal is being considered,
  respectively. Then, Assumption~\ref{assumption:var} or
  Assumption~\ref{assumption:time-dependence} implies for all distinct
  pairs $p,q\in\{1,\dots,d\}$ that
  \begin{equation*}
    \abs{\empcorr(\mathbf{z}_p,\mathbf{z}_q)}
    =\frac{\abs{\mathbf{z}_p\cdot\mathbf{z}_q}}{\norm{\mathbf{z}_p}\norm{\mathbf{z}_q}}
    <1.
  \end{equation*}
  Hence, for either $\mathcal{D}=\mathcal{D}_{\operatorname{var}}$ or
  $\mathcal{D}=\mathcal{D}_{\operatorname{TD}}$ it holds that
  $\rho(\mathcal{D})<1$, where $\rho$ is defined in
  \eqref{eq:rho_def}. Since the identity matrix satisfies that
  $\identity D \identity^{\top}$ is diagonal for all $D\in\mathcal{D}$, we
  can invoke Theorem~\ref{thm:kleinsteuber} to conclude that any
  matrix $M\in\R^{d\times d}$ for which $M D M^{\top}$ is diagonal for
  all $D\in\mathcal{D}$, is equal to the identity matrix up to scaling
  and permutation of its columns. Next, we consider the two signal
  types separately.
  \begin{itemize}
  \item\textbf{variance signal:} If there is a variance signal that satisfies
    Assumption~\ref{assumption:var}, assume there exists an invertible
    matrix $\widetilde{A}$ such that for all $g\in\groups$ and
    all $k,l\in g$ it holds that
    \begin{equation*}
      \widetilde{A}^{-1}(\cov(X_k)-\cov(X_l))(\widetilde{A}^{-1})^{\top}=\cov(S_k)-\cov(S_l).
    \end{equation*}
    Then, it also holds that
    \begin{equation*}
      (V\widetilde{A})\underbrace{(\cov(S_k)-\cov(S_l))}_{\in \mathcal{D}_{\operatorname{var}}}(V\widetilde{A})^{\top}=V(\cov(X_k)-\cov(X_l))V^{\top},
    \end{equation*}
    which is diagonal.
  \item \textbf{time-dependence signal:} If there is a time-dependence
    signal that satisfies Assumption~\ref{assumption:time-dependence}, assume
    there exists an invertible matrix $\widetilde{A}$ such that
    for all $g\in\groups$ and all $k,l\in g$ it holds that
    \begin{equation*}
      \widetilde{A}^{-1}(\cov(X_k,X_{k-\tau}))-\cov(X_l,X_{l-\tau}))(\widetilde{A}^{-1})^{\top}=\cov(S_k,S_{k-\tau})-\cov(S_l,S_{l-\tau}).
    \end{equation*}
    Then, it also holds that
    \begin{equation*}
      (V\widetilde{A})\underbrace{(\cov(S_k,S_{k-\tau})-\cov(S_l,S_{l-\tau}))}_{\mathcal{D}_{\operatorname{TD}}}(V\widetilde{A})^{\top}=V(\cov(X_k,X_{k-\tau}))-\cov(X_l,X_{l-\tau}))V^{\top},
    \end{equation*}
    which is diagonal.
  \end{itemize}
  Using the above reasoning, either of the two cases---depending on whether Assumption~\ref{assumption:var} or \ref{assumption:time-dependence} holds---shows that
  $V\widetilde{A}$ is equal to the identity matrix up to permutation
  and rescaling of its columns. Moreover, this implies that
  $\widetilde{A}$ is equal to $A$ up to scaling and permutation of its
  columns. This completes the proof of Theorem~\ref{thm:mixing}.
\end{proof}

\clearpage

\section{Complementary Material}\label{sec:complementary}

\subsection{America's Got Talent}

\begin{figure}[h]
  \centering
  \tikzstyle{mic} = [draw, minimum width = 0.75cm, minimum
  height = 0.75cm, rounded corners=0.2cm, fill=colD!50]
  \tikzstyle{speaker} = [draw, circle, minimum size=.5cm, node
  distance=1.75cm, fill=colC!50]
  \tikzstyle{audience} = [draw, rectangle, minimum width = 3cm, minimum
  height = 6cm, fill=colB!50]
  \tikzstyle{window} = [draw, rectangle, minimum width = 4cm, minimum
  height = 0.75cm, fill=colA!50]
  \begin{tikzpicture}[framed, background
    rectangle/.style={thick, draw=black,
      rounded corners}]
    \node [speaker] (speaker1) at (0,3) {Singer 1};
    \node [speaker] (speaker2) at (0,0) {Singer 2};
    \node [mic] (mic1) at (3.1,2) {Mic 1};
    \node [mic] (mic2) at (3.1,-1) {Mic 2};
    \node [audience] (audience) at (10,0) {Audience};
    \node [window] (window1) at (6,5) {Window 1};
    \node [window] (window2) at (3,-4) {Window 2};
    \draw [colD,domain=-40:10] plot ({1.5*cos(\x)}, {3+1.5*sin(\x)});
    \draw [colD,domain=-40:10] plot ({2*cos(\x)}, {3+2*sin(\x)});
    \draw [colD,domain=-40:10] plot ({2.5*cos(\x)}, {3+2.5*sin(\x)});
    \draw [colD,domain=-40:10] plot ({1.5*cos(\x)}, {0+1.5*sin(\x)});
    \draw [colD,domain=-40:10] plot ({2*cos(\x)}, {0+2*sin(\x)});
    \draw [colD,domain=-40:10] plot ({2.5*cos(\x)}, {0+2.5*sin(\x)});
    \draw [colD,domain=145:215] plot ({10.5+3*cos(\x)}, {3.5*sin(\x)});
    \draw [colD,domain=145:215] plot ({10.5+3.5*cos(\x)}, {4*sin(\x)});
    \draw [colD,domain=145:215] plot ({10.5+4*cos(\x)}, {4.5*sin(\x)});
    \draw [colD,domain=145:215] plot ({10.5+4.5*cos(\x)}, {5*sin(\x)});
    \draw [colD,domain=240:300] plot ({6+3*cos(\x)}, {7+3*sin(\x)});
    \draw [colD,domain=240:300] plot ({6+3.5*cos(\x)}, {7+3.5*sin(\x)});
    \draw [colD,domain=240:300] plot ({6+4*cos(\x)}, {7+4*sin(\x)});
    \draw [colD,domain=60:120] plot ({3+3*cos(\x)}, {-6+3*sin(\x)});
    \draw [colD,domain=60:120] plot ({3+3.5*cos(\x)}, {-6+3.5*sin(\x)});
    \draw [colD,domain=60:120] plot ({3+4*cos(\x)}, {-6+4*sin(\x)});
  \end{tikzpicture}
  \caption{Schematic of the ``America's Got Talent Duet Problem'' described in
    Example~\ref{ex:toy_example}. The sound from the windows and
    audience is taken to be confounding noise which has fixed
    covariance structure over given time blocks. The challenge is to
    recover the sound signals from the individual singers given the
    recordings of the two microphones.}
  \label{fig:crowdedroom}
\end{figure}
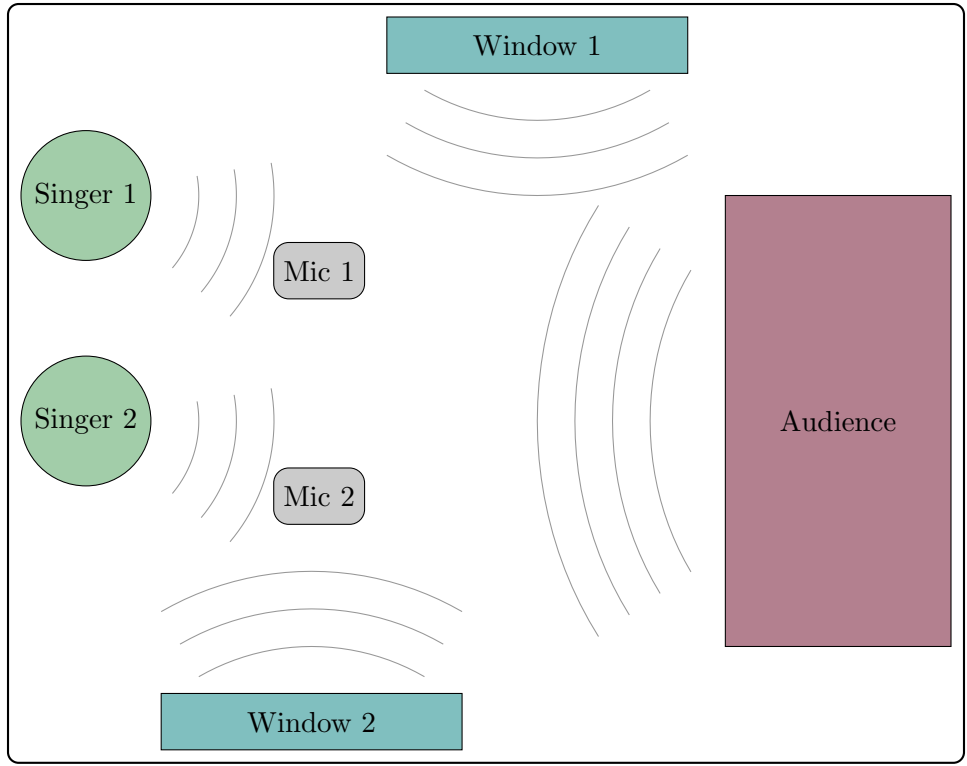

\clearpage

\subsection{Causality
  Example}\label{sec:causal_appendix}

\begin{figure}[h!]
  \centering
  \begin{tikzpicture}[scale=2.5]
    \tikzstyle{VertexStyle} = [shape = circle]
    \SetGraphUnit{2}
    \Vertex[Math,L=\COtwo_t,x=2,y=0.6]{X1}
    \Vertex[Math,L=\Temp_t,x=2,y=-0.6]{X2}
    \Vertex[Math,L=\widetilde{S}^1_t,x=2,y=1.5]{E1}
    \Vertex[Math,L=\widetilde{S}^2_t,x=2,y=-1.5]{E2}
    \Vertex[Math,L=\COtwo_{t-1},x=0,y=0.6]{X11}
    \Vertex[Math,L=\Temp_{t-1},x=0,y=-0.6]{X21}
    \Vertex[Math,L=\widetilde{S}^1_{t-1},x=0,y=1.5]{E11}
    \Vertex[Math,L=\widetilde{S}^2_{t-1},x=0,y=-1.5]{E21}
    \Vertex[Math,L=\COtwo_{t-2},x=-2,y=0.6]{X12}
    \Vertex[Math,L=\Temp_{t-2},x=-2,y=-0.6]{X22}
    \Vertex[Math,L=\widetilde{S}^1_{t-2},x=-2,y=1.5]{E12}
    \Vertex[Math,L=\widetilde{S}^2_{t-2},x=-2,y=-1.5]{E22}
    \tikzstyle{VertexStyle} = [minimum width = 4.2em]
    \tikzstyle{EdgeStyle} = [->,>=stealth',shorten > = 2pt]
    \Edge(E1)(X1)
    \Edge(E2)(X2)
    \Edge(E11)(X11)
    \Edge(E21)(X21)
    \Edge(E12)(X12)
    \Edge(E22)(X22)
    \tikzset{EdgeStyle/.append style = {->, bend left, color=red}}
    \Edge[label=$\alpha$](X1)(X2)
    \Edge[label=$\beta$](X2)(X1)
    \Edge[label=$\alpha$](X11)(X21)
    \Edge[label=$\beta$](X21)(X11)
    \Edge[label=$\alpha$](X12)(X22)
    \Edge[label=$\beta$](X22)(X12)
    \tikzstyle{EdgeStyle} = [->,>=stealth',shorten > = 2pt, color=lightgray]
    \Edge(X12)(X11)
    \Edge(X22)(X21)
    \Edge(X22)(X11)
    \Edge(X12)(X21)
    \Edge(X11)(X1)
    \Edge(X21)(X2)
    \Edge(X21)(X1)
    \Edge(X11)(X2)
    \tikzstyle{EdgeStyle} = [>=stealth',shorten > = 2pt,shorten < = 2pt, <->,
    bend left=40, dashed, color=gray]
    \Edge(E1)(E2)
    \Edge(E11)(E21)
    \Edge(E12)(E22)
  \end{tikzpicture}
  \caption{Graphical representation of the causal feedback model between
    carbon dioxide ($\text{CO}_2$) and temperature (T). The dashed
    line corresponds to stationary confounding.}
  \label{fig:graph}
\end{figure}
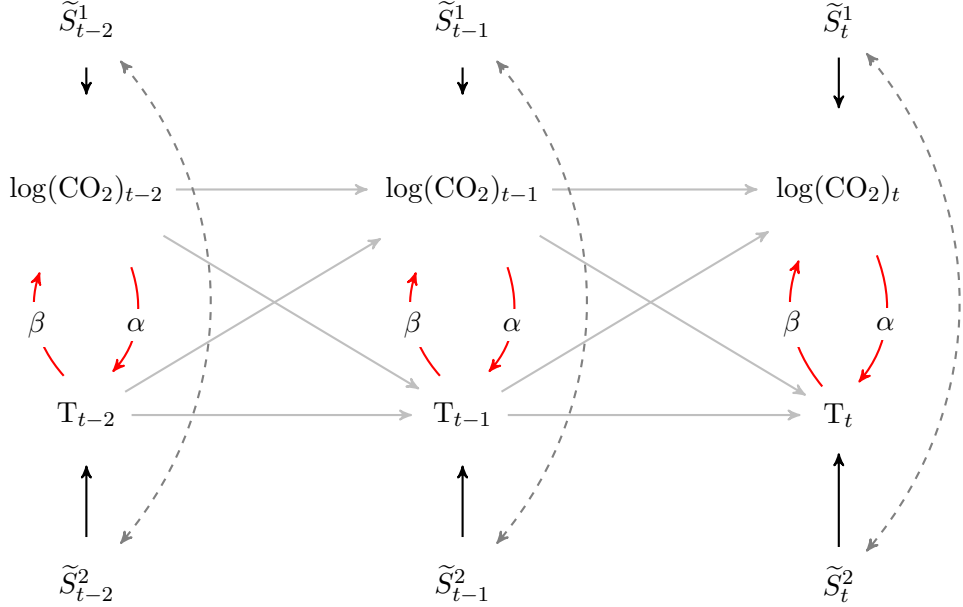

\subsection{Simulations}\label{sec:appendix_simu}

The GARCH model that we simulate from in Section~\ref{sec:GARCH} is
specified as follows. We simulate sources $S^1,\dots,S^d$ from the following
GARCH-type model
\begin{align*}
  \sigma_i^2&=a_1 + a_2\cdot(S_{i-1}^j)^2 + a_3\cdot\sigma_{i-1}^2\\
  S_{i}^j&=b_1S_{i-1}^j +\cdots + b_pS_{i-p}^j + \sigma_i\epsilon_i,
\end{align*}
where the $\epsilon_i$ are independent and standard normal. Moreover, the noise terms
$H^1,\dots,H^d$ are assumed to be either given by the following AR-process
\begin{equation*}
  H_{i}^j=c_1H_{i-1}^j +\cdots + c_qH_{i-q}^j + \nu_i,
\end{equation*}
where $\nu_i$ are independent standard normal, $q$ is uniformly
distributed on $\{1,\dots,10\}$ and $c_i$ independent
$\mathcal{N}\left(0, 1/(i+1)^2\right)$ or simply as iid
$\mathcal{N}\left(0, 1\right)$ random variables. The final data is
then constructed according to the following equation
\begin{equation*}
  X_i=A\cdot S_{i}+\widetilde{H}_{i},
\end{equation*}
where $\widetilde{H}_i=ACH_i$ and $A, C\in\R^{d\times d}$ are
sampled with iid entries from $\mathcal{N}(0, 1)$ and
$\mathcal{N}(0, \frac{1}{d})$, respectively. To illustrate, the
effect of the signal type we consider the following three settings.
\begin{itemize}
\item \textbf{Setting 1 (time-independent with
    changing variance)}\\
  Set $a=(0.005, 0.026, 0.97)$ such that the variance changes over
  time and $p=0$ to ensure time-independent signals. Based on these
  settings we sample $n=200000$ observations.
\item \textbf{Setting 2 (varying time-dependence structure
   with constant variance)}\\
  Set $a=(1, 0, 0)$ such that the variance is fixed to $1$. Then,
  sample $p$ $100$ times uniformly from $\{1,\dots,10\}$ and $b_i$
  independent from $\mathcal{N}\left(0, 1/(i+1)^2\right)$ and simulate
  $2000$ observations for each of the $100$ parameter settings,
  leading to a total of $n=200000$ observations.
\item \textbf{Setting 3 (varying time-dependence structure
    with changing variance)}\\
  Set $a=(0.005, 0.026, 0.97)$ such that the variance changes over
  time. Then, we sample $p$ $100$ times uniformly from
  $\{1,\dots,10\}$ and $b_i$ independent from
  $\mathcal{N}\left(0, 1/(i+1)^2\right)$ and simulate $2000$
  observations for each of the $100$ parameter settings, leading to
  a total of $n=200000$ observations.
\end{itemize}

\clearpage

\section{EEG Experiments on \protect\hyperlink{dat:bcicomp}{Data~Set~4}}\label{sec:EEG_results}

Analogous to Sections~\ref{sec:stability} and
\ref{sec:EEG.classification} we conducted experiments on the
BCICompIV2a \hyperlink{dat:bcicomp}{Data~Set~4}, the results of which
are presented in the subsequent sections.

\begin{mdframed}[roundcorner=5pt,
  frametitle={\hypertarget{dat:bcicomp}{Data Set 4}: BCICompIV2a data}]
  This data set is due to \citet[Section 5]{tangermann2012}
  and consists of EEG recordings of 9 subjects performing multiple
  trials of 4 different motor imagery tasks.
  The data set contains recordings of
  \begin{itemize}
  \item 9 subjects, each recorded on 2 different days,
  \item for each subject and day there exist 6 runs with 48 trials,
  \item each recording consists of 22 EEG channels recorded at 250 Hz sampling frequency,
  \item and is bandpass filtered between 0.5 and 100 Hz and is 50 Hz
    notch filtered.
  \end{itemize}
  For our analysis we only use the trial-data, i.e., the concatenated
  segments of seconds 3--6 of each trial (corresponding to the motor
  imagery part of the trials~\citep{tangermann2012}).  We further
  preprocess the data by re-referencing to common average reference
  (car) and projecting onto the orthogonal complement of the null
  component.

  As features for classification experiments (cf.\
  Section~\ref{sec:EEG.classification}) on this data set we use
  bandpower in the 8--30 Hz band as measured by the log-variance of
  the 8--30 Hz bandpass-filtered trial data~\citep{lotte2018review}.
\end{mdframed}

\clearpage
\subsection{Stability and Independence}\label{sec:appendix_stability}

\begin{figure}[h]
\centering
  \scalebox{0.84}{\includegraphics{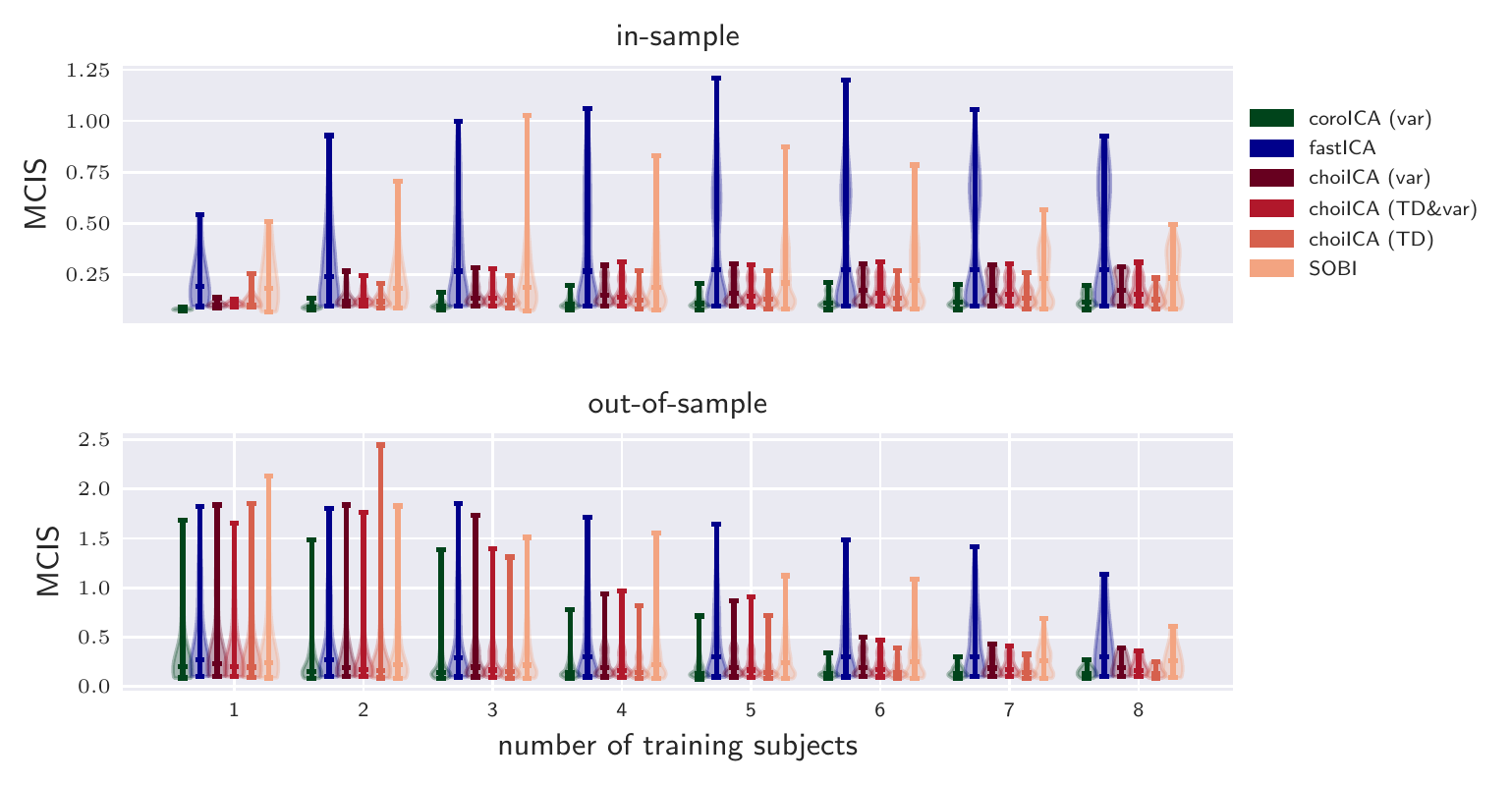}}
  \caption{Experimental results for comparing the stability of sources
    (MCIS: small implies stable) trained on different numbers of
    training subjects (cf.\ Section~\ref{sec:stability}),
    here on the BCICompIV2a \protect\hyperlink{dat:bcicomp}{Data~Set~4},
    demonstrating that \coroICA, in contrast to the competing ICA methods,
    can successfully incorporate more
    training subjects to learn more stable unmixing matrices when
    applied to new unseen subjects.}
  \label{fig:instability_bci}
\end{figure}

\begin{figure}[h]
\centering
  \scalebox{0.84}{\includegraphics{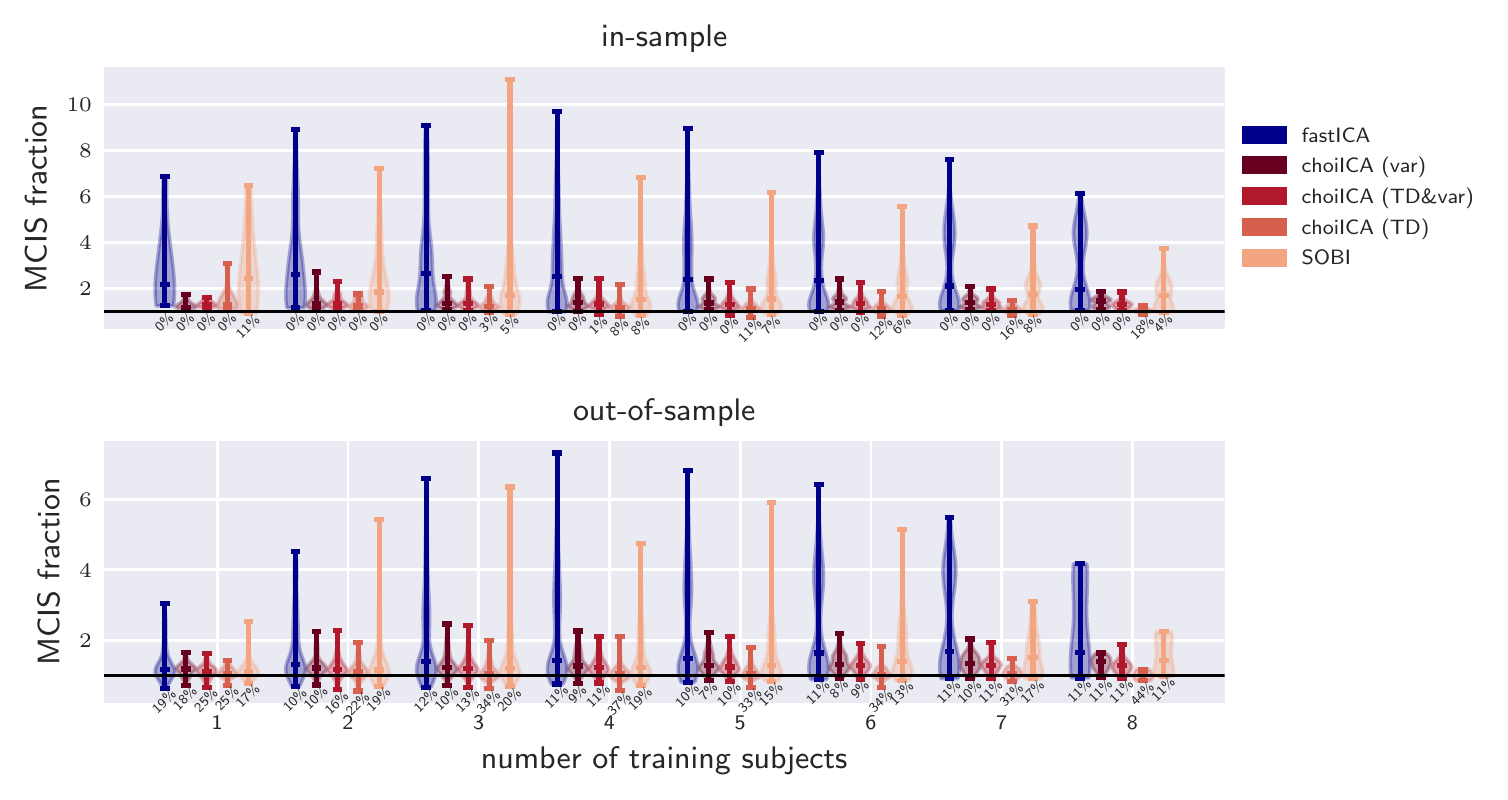}}
  \caption{Experimental results for comparing the stability of sources
    of competitors relative to the stability obtained by
    \coroICA (MCIS fraction: $>1$ implies less stable than
    \coroICA) trained on different numbers of training subjects (cf.\
    Section~\ref{sec:stability}), here on the BCICompIV2a
    \protect\hyperlink{dat:bcicomp}{Data~Set~4}, demonstrating that \coroICA
    can successfully incorporate more training subjects to learn more
    stable unmixing matrices when applied to new unseen subjects.}
  \label{fig:instability_fraction_bci}
\end{figure}

\FloatBarrier

\subsection{Classification based on Recovered Sources}\label{sec:appendix_EEG.classification}

\begin{figure}[h!]
  \includegraphics{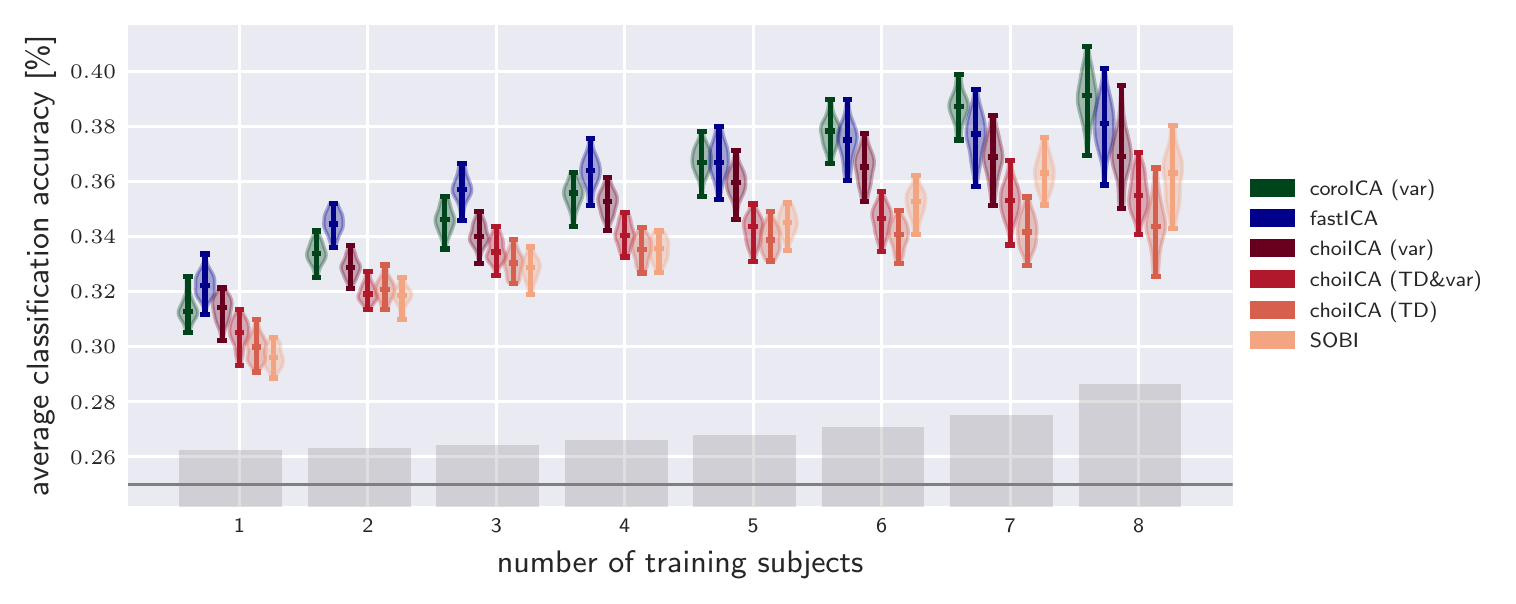}
  \caption{Classification accuracies on held-out subjects (cf.\
    Section~\ref{sec:EEG.classification}), here on the BCICompIV2a
    \protect\hyperlink{dat:bcicomp}{Data~Set~4}. Gray regions indicate a 95\%
    confidence interval of random guessing accuracies.}
  \label{fig:classification_bci}
\end{figure}

\clearpage
\section{All Topographies and Activation Maps on \protect\hyperlink{dat:covertattention}{Data~Set~3}}\label{sec:alltopos}
\begin{figure}[h!]
  \scalebox{0.94}{\includegraphics{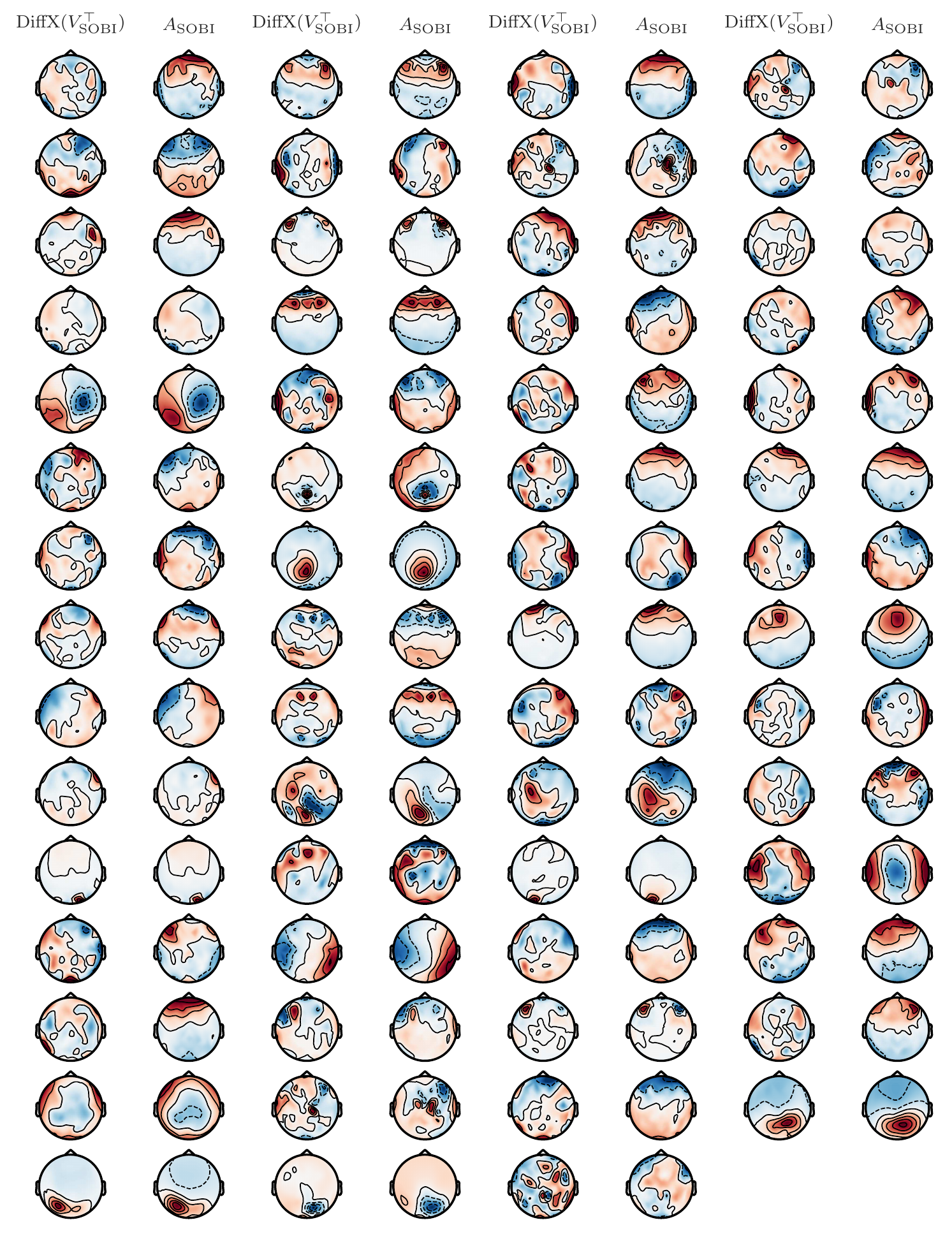}}
  \caption{Activation maps (left of each pair of columns) and
    topographies (right of each pair of columns) of $59$ components
    extracted by SOBI on the CovertAttention
    \protect\hyperlink{dat:covertattention}{Data~Set~3}.  For
    components that are well resolved, both should look similar
    (cf.~Section~\ref{sec:scores} and \ref{sec:topographic_maps}).  }
\end{figure}

\begin{figure}[h!]
  \scalebox{0.94}{\includegraphics{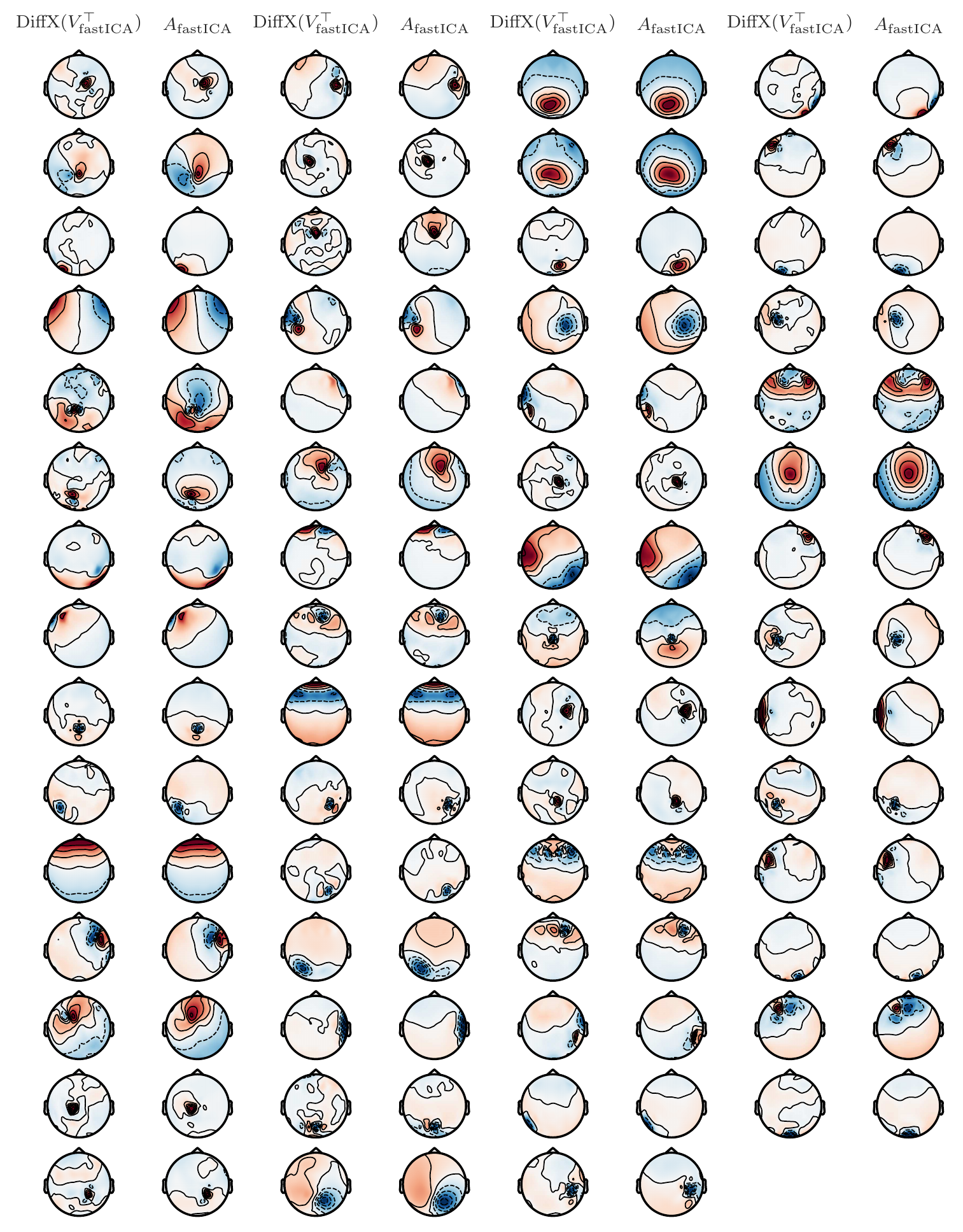}}
  \caption{Activation maps (left of each pair of columns) and
    topographies (right of each pair of columns) of $59$ components
    extracted by fastICA on the CovertAttention
    \protect\hyperlink{dat:covertattention}{Data~Set~3}.  For
    components that are well resolved, both should look similar
    (cf.~Section~\ref{sec:scores} and \ref{sec:topographic_maps}).  }
\end{figure}

\begin{figure}[h!]
  \scalebox{0.94}{\includegraphics{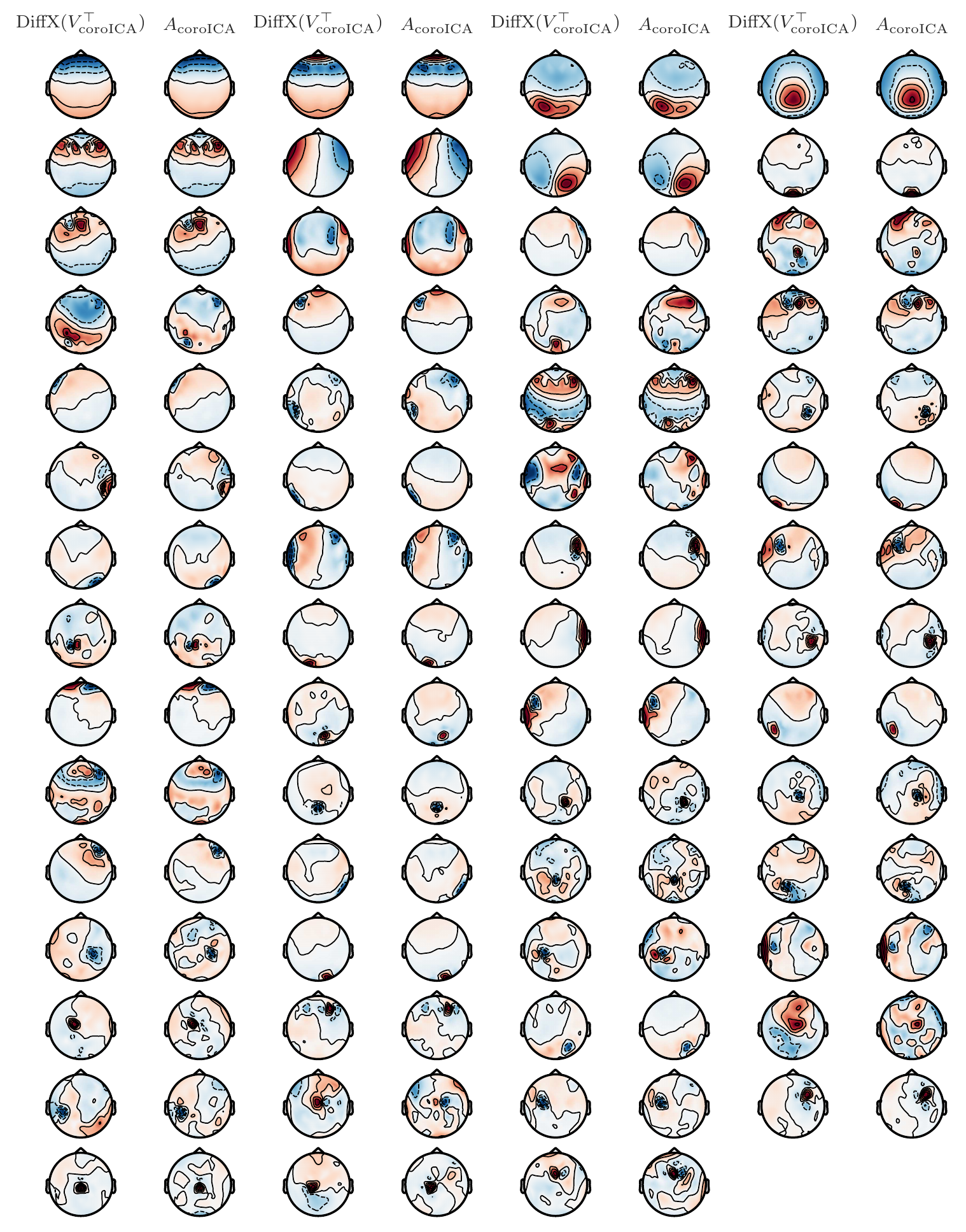}}
  \caption{Activation maps (left of each pair of columns) and
    topographies (right of each pair of columns) of $59$ components
    extracted by \coroICAvar on the CovertAttention
    \protect\hyperlink{dat:covertattention}{Data~Set~3}.  For
    components that are well resolved, both should look similar
    (cf.~Section~\ref{sec:scores} and \ref{sec:topographic_maps}).  }
\end{figure}

\FloatBarrier

\bibliography{references}

\end{document}